\theoremstyle{plain}
\definecolor{theoremcolor}{rgb}{0.97, 0.97, 0.97} 
\definecolor{examplecolor}{rgb}{1, 1, 1.0}
\newmdtheoremenv[linewidth=0pt,innerleftmargin=4pt,innerrightmargin=4pt]{definition}{Definition}
\newmdtheoremenv[linewidth=0pt,innerleftmargin=4pt,innerrightmargin=4pt]{proposition}{Proposition}
\newmdtheoremenv[linewidth=0pt,innerleftmargin=0pt,innerrightmargin=0pt,backgroundcolor=examplecolor]{example}{Example}
\newmdtheoremenv{corollary}{Corollary}
\newmdtheoremenv{theorem}{Theorem}
\newmdtheoremenv{lemma}{Lemma}
\newmdtheoremenv{prop}{Proposition}
\theoremstyle{remark}
\newtheorem{remark}[theorem]{Remark}
\useunder{\uline}{\ul}{}
\icmltitlerunning{FIC-TSC: Learning Time Series Classification with Fisher Information Constraint}
\begin{document}

\twocolumn[
\icmltitle{FIC-TSC: Learning Time Series Classification with Fisher Information Constraint}



\icmlsetsymbol{equal}{*}

\begin{icmlauthorlist}
\icmlauthor{Xiwen Chen}{clemson}
\icmlauthor{Wenhui Zhu}{asu}
\icmlauthor{Peijie Qiu}{wasu}
\icmlauthor{Hao Wang}{clemson}
\icmlauthor{Huayu Li}{ua}
\\
\icmlauthor{Zihan Li}{umb}
\icmlauthor{Yalin Wang}{asu}
\icmlauthor{Aristeidis Sotiras}{wasu}
\icmlauthor{Abolfazl Razi}{clemson}
\end{icmlauthorlist}

\icmlaffiliation{clemson}{Clemson University, USA.}
\icmlaffiliation{wasu}{Washington University in St. Louis, USA.}
\icmlaffiliation{asu}{Arizona State University, USA.}
\icmlaffiliation{ua}{University of Arizona, USA }
\icmlaffiliation{umb}{University of Massachusetts Boston, USA }
\icmlcorrespondingauthor{Xiwen Chen}{xiwenc@g.clemson.edu}
\icmlcorrespondingauthor{Abolfazl Razi}{arazi@clemson.edu}

\icmlkeywords{Machine Learning, ICML}

\vskip 0.3in
]



\printAffiliationsAndNotice{}  

\begin{abstract}

Analyzing time series data is crucial to a wide spectrum of applications, including economics, online marketplaces, and human healthcare. In particular, time series classification plays an indispensable role in segmenting different phases in stock markets, predicting customer behavior, and classifying worker actions and engagement levels. These aspects contribute significantly to the advancement of automated decision-making and system optimization in real-world applications. However, there is a large consensus that time series data often suffers from domain shifts between training and test sets, which dramatically degrades the classification performance. Despite the success of (reversible) instance normalization in handling the domain shifts for time series regression tasks, its performance in classification is unsatisfactory. In this paper, we propose \textit{FIC-TSC}, a training framework for time series classification that leverages Fisher information as the constraint. We theoretically and empirically show this is an efficient and effective solution to guide the model converge toward flatter minima, which enhances its generalizability to distribution shifts. We rigorously evaluate our method on 30 UEA multivariate and 85 UCR univariate datasets. Our empirical results demonstrate the superiority of the proposed method over 14 recent state-of-the-art methods.
\end{abstract}

\section{Introduction}

Time series analysis is crucial in a wide range of applications, such as network traffic management \cite{ferreira2023forecasting}, healthcare \cite{magnetoencephalography,niknazar2024multi}, environmental science \cite{wu2023interpretable}, and economics \cite{sezer2020financial}. This is due to the fact that much of the data they generate or collect naturally follows a time series format. Representative examples include stock prices in finance, electrocardiogram (ECG) signals in healthcare monitoring, and network traffic, activity logs, and social media data. In this work, we focus on the study of the time series classification (TSC) problem, one of the most important tasks in time series analysis \cite{bagnall2017great,chen2024timemil,ruiz2021great,wang2024deep}, which enables the categorization of these data sequences into distinct behaviors or outcomes.




There is broad consensus that time series data often experiences training/testing domain shifts, where the testing time series distribution differs significantly from that of the training data, resulting in a significant drop in performance \cite{kim2022reversible}. This domain shift issue is largely attributed to several factors, including sensor variability, environmental changes, differences in measurement or preprocessing methods, and data collection at different times.  A common approaches to mitigate domain shifts involve designing normalization to reduce domain-specific biases, enhance feature invariance, mitigate covariate shifts, and promote more generalizable representation learning~\cite{batchnormlize,InstanceNormalization,wu2018groupnormalization,LI2018109}. One representative normalization method is Reversible Instance Normalization~\citep[RevIN,][]{kim2022reversible,liu2023adaptive}, which has been widely used in time series forecasting (regression) tasks.
However, adjusting normalization may pose significant challenges to model training, e.g., gradient instability, increased sensitivity to hyperparameters, difficulty in optimizing the loss landscape, and potential overfitting to the training domain. Consequently, this leads to a mismatch between training and inference loss~\cite{dinh2017sharpminimageneralizedeep,Zhou_2021}. More importantly, its effectiveness in time series classification remains \textbf{unexplored} in previous literature.

To fill this gap, our study starts the investigation of domain shifts in time series classification task across several datasets. We confirm that the domain shift issue is also common in time series classification tasks.
We further conduct an additional analysis of \texttt{RevIN} in time series classification tasks, revealing that it is surprisingly ineffective this task. This motivates us to develop a method that is more tailored to domain shift issues in time series classification tasks.
To this end, we propose a novel learning method based on Fisher information, which measures the amount of information an observable random variable carries about an unknown parameter \cite{ly2017tutorial}. In this context, we can utilize it as a measure of a neural network's sensitivity to small changes in input data. However, directly applying Fisher information as a regularization term in gradient-based optimization suffers from a high computational and memory cost: (i) the Fisher Information Matrix (FIM), a matrix that contains the Fisher information for all parameters, is prohibitively large due to its quadratic complexity w.r.t. the number of parameters; (ii) it needs to back-propagate twice (the first pass for computing the FIM and the second for updating the neural network), which introduces relatively substantial computation. To this end, we utilize diagonal approximation and propose gradient re-normalization based on FIM to tackle these challenges.

We find our method is surprisingly effective and has a nice theoretical interpretation. First, it is easy to be integrated into the existing learning framework with automatic differentiation without an additional back-propagation. It only requires a $\mathcal{O}(n)$ memory complexity w.r.t. $n$ number of neural network parameters. Second, our method can guide the neural network to converge to a flat local minimum, potentially resulting in better generalizability in dealing with the issue of domain shifts~\citep[see e.g.,][]{keskar2016large,zhang2024implicit}. 
Finally, we show that despite the constraints, the theoretical convergence rate remains on par with that of standard neural network training.

In summary, our contributions are two-fold: 
\textbf{(i)} We analyze the domain shift problem in time series classification and highlight the ineffectiveness of previous methods in addressing this issue in classification tasks. \textbf{(ii)} To resolve this, we propose \textit{FIC-TSC}, a time series classification model trained with a novel Fisher Information Constraint. We rigorously evaluate our approach on 30 UCR multivariate time series classification datasets \cite{bagnall2018uea} and 85 UEA univariate time series classification datasets \cite{UCRArchive}, demonstrating the superiority of our method over state-of-the-art methods.



\section{Related Work}

\noindent\textbf{Time Series Classification.} Traditional methods for TSC focus on similarity measurement techniques \cite{berndt1994using,seto2015multivariate}, while further, the advent of deep learning has transformed TSC by enabling automated feature extraction and improving performance significantly. They employ or develop based on different architectures, such as CNN/LSTM hybrid~\cite{karim2019multivariate,zhang2020tapnet}, purely CNN~\cite{ismail2020inceptiontime,li2021shapenet,wu2022timesnet}, and Transformer~\cite{zerveas2021transformer,nie2023a,foumani2024improving,eldele2024tslanet}.
Instead of supervised learning, there are also recent methods that benefit from self-supervised pre-training, such as \cite{lin2023nutime,li2024mts}.
However, one significant issue is the non-convex nature of the optimization problem inherent in training deep neural networks. Due to the complex structure of these models, the loss landscape is often filled with numerous local minima. This non-convexity poses challenges in finding a global minimum or even a sufficiently good local minimum. Hence, our proposed method can be one of the elegant solutions to mitigate this issue and guide the neural network to converge to a flat minimum with desirable generalizability (for details, see Sec.~\ref{sec:4.2}). 

\begin{figure}[!t]
    \centering
    \includegraphics[width=0.99\linewidth]{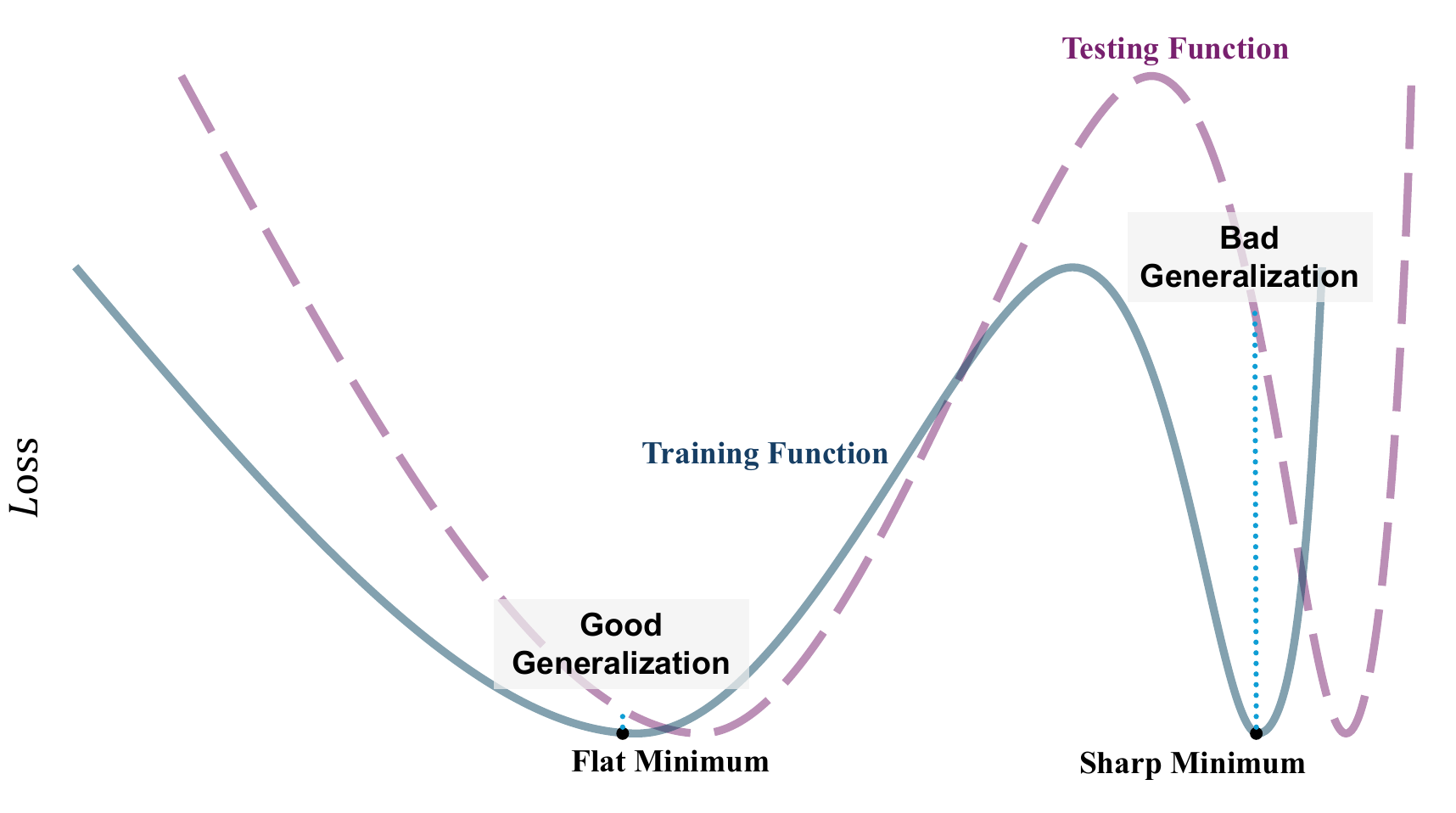}
    \caption{A conceptual visualization of Flat and Sharp Minima. The Y-axis indicates loss, and the X-axis represents the variables (neural network parameters). Under train (blue) and test (purple) data domains, due to potential distribution shift, the landscapes differ, i.e., with the same network parameter, the loss is often different. A flat minimum can potentially lead to a low test error, while a sharp minimum potentially leads to a high test error. }
    \label{fig:local_mini}
\end{figure}

\noindent\textbf{Sharpness and Model Generalizability.} In the context of deep learning optimization, sharpness refers to the curvature of the loss landscape. Several works have illustrated that sharper minima often lead to poor generalization performance~\citep[see e.g.,][]{keskar2016large,neyshabur2017exploring,zhang2024implicit}. This is because models that converge to sharp minima may overfit the training data, leading to a larger generalization gap. In contrast, a flat minimum is less sensitive to the small perturbation of parameters, and hence, is more robust to domain shifts (see Fig. \ref{fig:local_mini} for illustration). Accordingly, some works \cite{foret2020sharpness,andriushchenko2022towards,kim2022Fisher,yun2024riemannian} have been proposed to account for the sharpness of minima during training explicitly. They achieve this by modifying the traditional gradient-based optimization process: at each iteration, they compute the loss gradient 
w.r.t. parameters perturbed with small noise. This perturbation encourages the optimizer to move toward regions of the loss landscape where the loss remains low over a larger neighborhood (i.e., flatter minima). Recent work~\cite{ilbertsamformer} designed for forecasting also falls into this category. However, the obvious issue is that they need to back-propagate twice at each training iteration, which introduces considerable computational and memory overheads. Consequently, their methods may have not been widely applied. In contrast, our method only requires a single back-propagation at each iteration, just like the standard model training. More importantly, we demonstrate that despite the inclusion of the constraint, the convergence rate of our method is on par with standard training processes.

\begin{figure*}[htbp]
    \centering
    \includegraphics[width=0.99\linewidth]{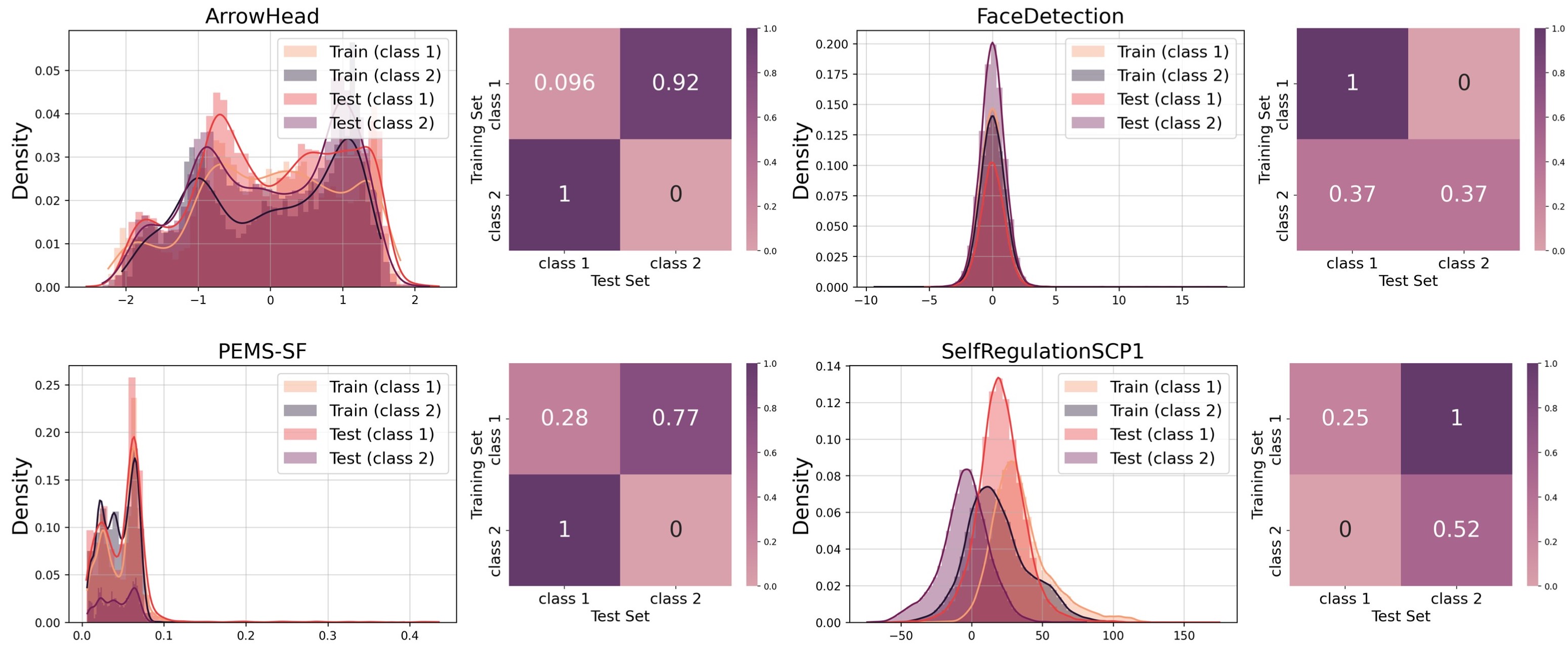} 
    \vspace{-0.5cm}
    \caption{Histograms representing sequences from the two selected classes of exemplary datasets with train/test distribution shift.
    The dissimilarity matrix illustrates the min-max normalized \texttt{Wasserstein-1 distance} between class distribution from different sets. A lower value implies two distributions are more similar.
    It is observed that distribution shifts exist between the entire training and testing sets and within the same classes across these sets. }
    \label{fig:inv1}
\end{figure*}

\noindent\textbf{Fisher Information in Time Series Analysis.} We notice some conventional approaches (i.e., not based on deep learning) have utilized Fisher information as a tool for time series analysis. For example, authors in \cite{dobos2013Fisher} use the Fisher Information to identify changes in the statistical properties of time-series data, facilitating the segmentation of the data into homogeneous intervals. Likewise, authors in \cite{telesca2017performance,wang2018analysis,contreras2023belief} employ Fisher information and Shannon Entropy to measure the temporal properties of time series in dynamic systems. In this context, the parameters often represent statistical properties of the series, like mean, variance, or even parameters defining windowed segments of the series for localization analysis. Therefore, our methods are essentially different from theirs, where we propose the Fisher information-based constraint to guide the optimization of the neural networks.

\noindent\textbf{Domain adaptation.} It addresses distribution shifts by enabling models trained on a source domain to generalize to a related target domain. Common strategies include aligning feature distributions by minimizing statistical distances~\cite{chen2020homm} or using adversarial training to make features indistinguishable across domains \cite{purushotham2017variational,jin2022domain}.
In contrast, our method is orthogonal to these approaches. It does not require access to target domain data or labels during training, making it suitable when the target distribution is unknown or unavailable. Importantly, domain adaptation could be applied as a post-training or downstream enhancement once target domain data becomes available. In this sense, our method and these techniques can be complementary.

\section{Preliminary Analysis } \label{sec:inv}

\noindent\textbf{Distribution Discrepancy.}
First, we validate our conjecture of the distribution discrepancy between the training and testing sets on several datasets. For convenience, following \cite{kim2022reversible}, in each dataset, we illustrate the histogram of the first channels of all samples from both training and test sets, offering a statistical perspective to interpret the distribution.  As shown in Fig. \ref{fig:inv1}, the observation is aligned with \cite{kim2022reversible}, and it is evident that the distribution discrepancy is a common phenomenon across different datasets from two perspectives: (i) distribution between the entire training and testing sets and (ii) distributions of the same class from the training set and test set. 
We further employ the Wasserstein-1 distance to evaluate the distribution distance, which can be mathematically defined as below,
\begin{definition}
    (\cite{peyre2019computational}) The \textbf{Wasserstein-1 distance} between two probability distributions $P$ and $Q$ with cumulative distribution functions (CDFs) $F_P(x)$ and $F_Q(x)$ is defined as:
\begin{align}
  W_1(P, Q) = \inf_{\gamma \in \Gamma(P,Q)} \mathbb{E}_{(x,y) \sim \gamma} [d(x, y)]  
\end{align}

where $\Gamma(P, Q)$ is the set of all couplings (joint distributions) $\gamma(x, y)$ with marginals $P$ and $Q$, and $d(x, y)=|x-y|$ is a absolute value distance function between points $x$ and $y$.
\end{definition}

Here, we used the discrete case Wasserstein-1 distance to calculate the distance between 1D discrete distributions~\cite{ramdas2017wasserstein}. The dissimilarity matrices of these datasets are also shown in Fig. \ref{fig:inv1}, where each element denotes class distribution from different sets.  For example, the upper left element represents the distance between class 1 from the training set and class 1 from the test set. 
It is worth mentioning that we apply min-max normalization here for better visualization. 
We observe that the within-class distance (i.e., the same class distributions from different sets) can even be equal or greater than between-class distance (i.e., different classes from the same set), such as \texttt{FaceDetection} and \texttt{SelfRegulationSCP1}. These results reveal some potential reasons why the general classification methods have poor performance on time series data, and they motivate us to develop a method that can be somewhat robust for the distribution discrepancy.

\noindent\textbf{Effectiveness of \texttt{RevIN}.} The invention of Reversible Instance Normalization \cite{kim2022reversible} is designed to solve the train/test domain shift and has substantially promoted the performance of machine learning methods for sequence-to-sequence regression tasks, particularly forecasting tasks. This method applies instance normalization at the beginning to remove the non-stationary information (i.e., subtracting the mean and dividing by the standard deviation. This leads the sequence to be a zero-mean and standard deviation of one) and perform denormalization at the end to restore that information. In the classification task, only normalization at the beginning is appropriate since the output is logits rather than a sequence. Hence, we just use the term \texttt{IN} in the following text.
However, this technique has not illustrated any benefits in classification. This is reasonable since although \texttt{IN} reduces the distribution shifts between the entire training and testing sets, it can also reduce the distances between different class distributions within the same set. We show this fact in Fig. \ref{fig:inv2}. We further empirically evaluate this technique on ten datasets: \textbf{EC:} EthanolConcentration, \textbf{FD:} FaceDetection, \textbf{HW:} Handwriting, \textbf{HB:} Heartbeat, \textbf{JV:} JapaneseVowels, \textbf{SCP1:} SelfRegulationSCP1, \textbf{SCP2:} SelfRegulationSCP2, \textbf{SAD:} SpokenArabicDigits, \textbf{UW:} UWaveGestureLibrary, and \textbf{PS:} PEMS-SF.
 According to the results shown in Fig. \ref{fig:inv_inre}, \texttt{IN} does not positively affect most datasets' performance. In some datasets, \texttt{IN} even suppresses the performance. Applying \texttt{IN} finally lowers the average accuracy, aligning with our conjecture. Therefore, \texttt{IN} is not a desirable solution to solve the distribution discrepancy issue for the classification task. Thus, we seek to propose a new method for addressing this issue, and we will discuss it in the next section.

\begin{figure}[!t]
    \centering
    \includegraphics[width=0.99\linewidth]{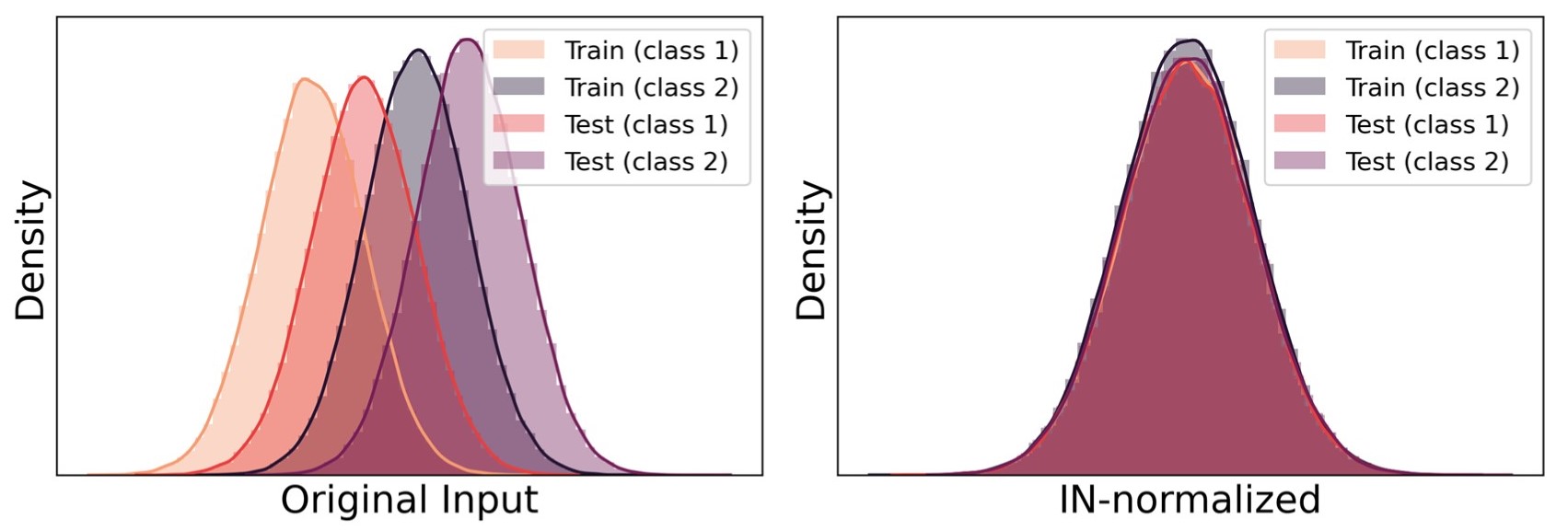}
    \vspace{-0.5cm}
    \caption{An illustration of the negative effect of Instance Normalization (IN). \textbf{Left}: The original input, and \textbf{Right}: Input after applying \texttt{IN}. It is observed that \texttt{IN} can reduce the difference of two class distributions. This may be disadvantageous for classification.  }
    \label{fig:inv2}
    \vspace{-0.1cm}
\end{figure}

\begin{figure}[!t]
    \centering
    \includegraphics[width=0.9\linewidth]{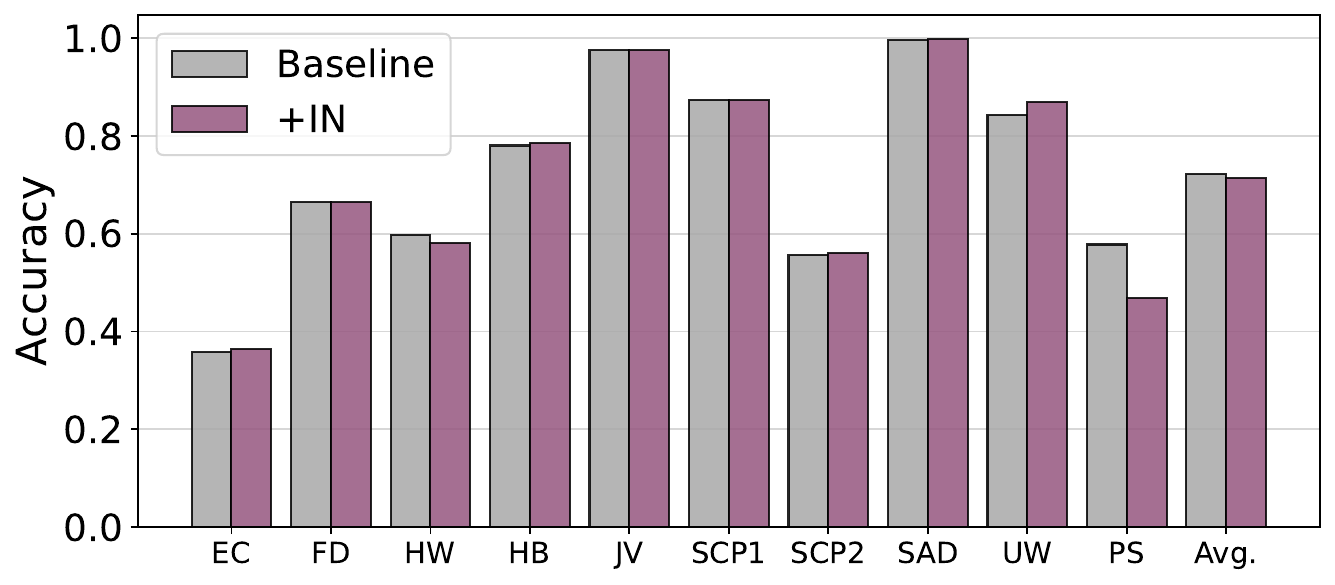}
    \vspace{-0.5cm}
    \caption{Comparison of classification accuracy between baseline and the model applying \texttt{IN}. \textbf{Avg.} indicates Average accuracy across all datasets.  
    It is observed that \texttt{IN} does \underline{\textit{not have any positive effect}} on the model for most datasets. A statistical test is conducted in Appendix \ref{appendix:stats1}.
    }
    \label{fig:inv_inre}
\end{figure}

\vspace{-0.25\baselineskip} 
\section{Method}

\subsection{Problem Formulation}
Time series data can be presented as $\{\boldsymbol{X}_1, \cdots, \boldsymbol{X}_n\}$, where $\boldsymbol{X}_i=\left\{\boldsymbol{x}_i^1, \cdots, \boldsymbol{x}_i^T \right\}$ denotes a time sequences containing $T$ time points, with each time point $\boldsymbol{x}_i^{\top} \in \mathbb{R}^{d}$ being a vector with a dimension of $d$. The goal of the task is to learn a machine-learning model that directly maps feature space $\mathcal{X}$ to target space $\mathcal{Y}$.
The $d=1$ implies this sequence is univariate, while $d>1$ is multi-variate. Our work can be generalized for both cases.


\subsection{Learning with the FIM constraint}\label{sec:4.2}
As discussed in Section \ref{sec:inv}, while normalization can mitigate train-test distribution shifts, it may also reduce inter-class distances, potentially hindering training effectiveness. Hence, we believe this is not a feasible solution for the classification task. In contrast, we expect to train a more robust network to handle the distribution difference among training and test sets from a gradient optimization perspective. To this end, we propose an FIC-constrained strategy to guide the optimization of the network.

FIM is used to measure the amount of information/sensitivity about unknown parameters $\Theta$ carried by the data $\mathcal{D}$, and lower Fisher information often represents the parameters that are less sensitive to a small change of the data, which potentially improves its robustness. We first give its definition here:
\begin{definition}
(\cite{kay1993statistical}) Given a model parameterized by $\Theta$ and an observable random variable $\mathcal{D}$, the Fisher Information Matrix (FIM) is defined as,
    \begin{align}\label{eq:def1}
&\mathcal{F}(\Theta)\\ \nonumber
=&\mathbb{E}_{p(\mathcal{D} \mid \Theta)}\left[(\nabla_\Theta \log p(\mathcal{D} \mid \Theta))(\nabla_\Theta \log p(\mathcal{D} \mid \Theta))^{\top}\right], 
\end{align}
where $\log p(\mathcal{D} \mid \Theta)$ denotes the log-likelihood.
\end{definition}

\begin{remark}\label{remar:loss}
     In the context of the classification task, this log-likelihood can be interpreted as the negative cross-entropy (maximizing the log-likelihood is empirically equivalent to minimizing cross-entropy loss). To this end, we present the commonly used cross-entropy loss here.
    \begin{align}
        \mathcal{L}(\{\mathcal{D}_i \}_{i=1}^{n}\mid \Theta) = -\frac{1}{n} \sum_{i=1}^{n} \log \hat{P}(\mathcal{D}_i\mid \Theta),
    \end{align}
     where $\mathcal{D}_i$ denotes a training pair, $n$ denotes the number of samples, $ \log \hat{P}(\mathcal{D}_i\mid \Theta)$ denotes the logarithm of the predicted probability of the true label. 
\end{remark}

\begin{remark}\label{remar:1}
     Directly computing FIM suffers from large computation and memory cost $\mathcal{O}(n^2)$, where $n$ denotes the number of parameters. 
\end{remark}
Remark \ref{remar:1} naturally motivates the simplification of the computation to facilitate the broader adoption of our method, particularly for users with limited computational resources. Hence, we apply the \textit{diagonal approximation}, which ignores the off-diagonal elements and only interest $\texttt{diag}(\mathcal{F}(\Theta))$ in the FIM. Here, $\texttt{diag}(\cdot)$ denotes the diagonal elements of a matrix. This drastically reduces the computational burden and memory needs to $\mathcal{O}(n)$. More importantly, it still has reasonable information, and we will employ this approximation in our theoretical justification later. Empirically, given a set of data pairs $\{\mathcal{D}_i \}_{i=1}^{n}$, the FIM is estimated as,
\begin{align}\label{eq:calf}
    \texttt{diag}(\mathcal{F}(\Theta)) = \nabla_\Theta \mathcal{L}(\Theta)\circ \nabla_\Theta \mathcal{L}(\Theta),
\end{align}
where $\circ$ denotes the element-wise product and again the off-diagonal elements of $\mathcal{F}(\Theta)$ is zero.


Additionally, simply applying FIM as regularization suffers double backward passes, where the first backward is used to compute the FIM and the second backward is used to impose regularization for updating the neural network. This again will introduce extra computation. 
Therefore, to avoid the computational cost of performing double backward passes, we constrain the optimization by introducing a normalization strategy. Let $\epsilon$ be the pre-defined upper-bound and $\|\mathcal{F}\|_1$ denote the \textit{entrywise 1-norm}\footnote{this norm will be utilized in the subsequent operations involving matrices} of the FIM. 
If $\|\mathcal{F}\|_1 \geq \epsilon$, the gradient of each parameter $\theta_i$ is normalized as follows:
\begin{align}\label{eq:upnorm}
    \nabla_\Theta \mathcal{L}(\Theta)  \leftarrow \sqrt{\frac{\epsilon}{\|\mathcal{F}\|_1}} \nabla_\Theta\mathcal{L}(\Theta).
\end{align}
It is worth mentioning that under the diagonal approximation, $\|\mathcal{F}(\Theta)\|_1= \|\texttt{diag}(\mathcal{F}(\Theta))\|_1$, where $\texttt{diag}(\mathcal{F}(\Theta))$ is a vector.

Considering these two aspects mentioned above, we summarize the proposed \textit{FIC-constrained optimization} as,
\begin{align}
    \min_\Theta \mathcal{L}(\mathcal{D};\Theta)\quad s.t.~ \|\mathcal{F}(\Theta)\|_1\leq \epsilon.
\end{align}
The complete algorithmic summary is provided in Algorithm \ref{alg:1}, located in Appendix \ref{appendix:algorithm}.


\noindent\textbf{Theoretical Justification.} Now, we will delve into the theoretical support of our method and illustrate our method has the potential to lead to a better convergence with a theoretically guaranteed rate. 
We first note that there is a strong relationship between FIM and the second derivative of the loss,

\begin{lemma}\label{lm:1}
At a local minimum, the expected Hessian matrix of the negative log-likelihood is asymptotically equivalent to the Fisher Information Matrix, w.r.t $\Theta$, which is presented as,
 \begin{align}
     {\mathbb{E}_{p(\mathcal{D} \mid \Theta)}}\left[\nabla^2{(-\log p(\mathcal{D} \mid \Theta))}\right] =  \mathcal{F}.
 \end{align}
\end{lemma}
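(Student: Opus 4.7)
My plan is to derive this identity in the classical way, by differentiating the normalization condition $\int p(\mathcal{D}\mid\Theta)\,d\mathcal{D}=1$ twice with respect to $\Theta$ and rewriting the results in terms of the score $\nabla_\Theta \log p$. This is the standard information matrix equality from parametric statistics; it in fact holds at any $\Theta$ for which the usual Fisher-regularity conditions are satisfied, and the ``at a local minimum'' qualifier in the statement can be read as emphasizing that we evaluate at the $\Theta$ returned by optimization, where the interpretation of the FIM as the curvature of the expected loss is most natural (cf.\ Remark~\ref{remar:loss}).

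First, assuming dominated-convergence type conditions permit swapping $\nabla_\Theta$ with the integral over $\mathcal{D}$, I would differentiate $\int p(\mathcal{D}\mid\Theta)\,d\mathcal{D}=1$ once to obtain $\int \nabla_\Theta p\,d\mathcal{D}=0$, and via the log-derivative identity $\nabla_\Theta p = p\,\nabla_\Theta \log p$ conclude that the score has zero mean: $\mathbb{E}_{p(\mathcal{D}\mid\Theta)}[\nabla_\Theta \log p]=0$. This is not strictly needed for the final equality but cleanly organizes the argument.

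Second, I would differentiate once more to get $\int \nabla^2_\Theta p\,d\mathcal{D}=0$, then combine this with the product-rule identity
\begin{equation*}
\nabla^2_\Theta \log p \;=\; \frac{\nabla^2_\Theta p}{p} \;-\; (\nabla_\Theta \log p)(\nabla_\Theta \log p)^{\top}.
\end{equation*}
Taking expectations under $p(\mathcal{D}\mid\Theta)$ and using $\mathbb{E}_{p(\mathcal{D}\mid\Theta)}[\nabla^2_\Theta p/p]=\int \nabla^2_\Theta p\,d\mathcal{D}=0$ yields
\begin{equation*}
\mathbb{E}_{p(\mathcal{D}\mid\Theta)}\!\left[\nabla^2_\Theta \log p\right] \;=\; -\,\mathbb{E}_{p(\mathcal{D}\mid\Theta)}\!\left[(\nabla_\Theta \log p)(\nabla_\Theta \log p)^{\top}\right] \;=\; -\mathcal{F}(\Theta),
\end{equation*}
so that negating both sides produces the claimed equality between the expected Hessian of $-\log p$ and $\mathcal{F}$.

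The main obstacle is not algebraic but regularity: justifying the interchange of $\nabla_\Theta$ and $\int d\mathcal{D}$ twice requires that $p(\mathcal{D}\mid\Theta)$ together with its first two $\Theta$-derivatives be dominated by an integrable function uniformly in a neighborhood of the relevant $\Theta$, and that the support of $p$ not depend on $\Theta$. For smooth neural-network classifiers with softmax outputs these conditions are mild and customarily taken for granted, so once granted the proof reduces to the two-line calculation sketched above.
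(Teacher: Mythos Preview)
Your proposal is correct and follows essentially the same route as the paper: expand $\nabla^2_\Theta \log p$ via the quotient/product rule into $\nabla^2_\Theta p/p - (\nabla_\Theta \log p)(\nabla_\Theta \log p)^\top$, take expectation under $p(\mathcal{D}\mid\Theta)$, and use $\int \nabla^2_\Theta p\,d\mathcal{D}=\nabla^2_\Theta\int p\,d\mathcal{D}=0$ to kill the first term. You are also right that the ``at a local minimum'' hypothesis plays no role in the algebra---the paper states $\nabla p=0$ there but the subsequent computation does not actually invoke it, relying solely on the normalization identity, so your treatment of that qualifier as interpretive rather than essential is well founded.
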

\begin{remark}
See Appendix \ref{appendix:proof} for the proof. As previously discussed, minimizing the negative log-likelihood is equivalent to minimizing the loss.
\end{remark}

Accordingly, we realize that FIM can be bridged to the principle of \textit{sharpness}. Sharpness measures the curvature around a local minimum in a neural network’s loss landscape. It is formally defined as follows, 
\begin{definition}
     \cite{keskar2016large} For a non-negative valued loss function $\mathcal{L}_{\Theta}$, given $\mathcal{B}_2(\alpha, \Theta)$, a Euclidean ball with radius $\alpha$ centered at $\Theta$, we can define the $\alpha$-sharpness as, 
\begin{align}\label{eq:flat1}
\alpha\text{-sharpness} \propto  \frac{\max _{\Theta^{\prime} \in \mathcal{B}_2(\alpha, \Theta)}\mathcal{L}_{\Theta^{\prime}}-\mathcal{L}_{\Theta} }{1+\mathcal{L}_{\Theta}}.
\end{align}

\end{definition}

We observe this can be efficiently estimated at a local minimum,

\begin{corollary}\label{corollary:sharp}
At a local minimum, the upper bound of $\alpha\text{-sharpness} $ is able to be approximated via Taylor expansion as 
\begin{align}\label{eq:sharp}
    \alpha\text{-sharpness}\propto \frac{\alpha^2 \|\mathcal{F}\|_1 }{2(1+\mathcal{L}(\Theta))}.
\end{align}
\end{corollary}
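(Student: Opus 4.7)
The plan is to start from the defining ratio of $\alpha$-sharpness and Taylor-expand the loss around $\Theta$ up to second order, so that the numerator $\max_{\Theta'\in\mathcal{B}_2(\alpha,\Theta)}\mathcal{L}_{\Theta'}-\mathcal{L}_\Theta$ becomes a constrained maximum of an affine-plus-quadratic form in the displacement $v:=\Theta'-\Theta$ with $\|v\|_2\le\alpha$. This reduces the bound on sharpness to understanding two quantities: the linear term in the expansion, and the quadratic form in the Hessian.

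Next, I would invoke the local-minimum assumption: at $\Theta$, $\nabla_\Theta\mathcal{L}(\Theta)=0$, so the linear term drops out and the expansion reduces to $\mathcal{L}_{\Theta'}-\mathcal{L}_\Theta\approx\tfrac{1}{2}v^\top\nabla^2\mathcal{L}(\Theta)v+o(\|v\|^2)$. Using Remark \ref{remar:loss} to identify the loss with the negative log-likelihood and Lemma \ref{lm:1} to replace the expected Hessian by $\mathcal{F}$, I get $\mathcal{L}_{\Theta'}-\mathcal{L}_\Theta\approx\tfrac12 v^\top\mathcal{F}\,v$. Inserting this into Eq.~\eqref{eq:flat1}, the task becomes bounding $\max_{\|v\|_2\le\alpha}\tfrac12 v^\top\mathcal{F}\,v$ from above and dividing by $1+\mathcal{L}(\Theta)$.

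For the quadratic maximum I would use the diagonal approximation already adopted in the paper, under which $v^\top\mathcal{F}\,v=\sum_i \mathcal{F}_{ii}\,v_i^2$. Since the FIM is positive semidefinite, its diagonal entries are non-negative, and we have the elementary bound $\max_{\|v\|_2\le\alpha}\sum_i \mathcal{F}_{ii}\,v_i^2\le\alpha^2\sum_i \mathcal{F}_{ii}=\alpha^2\|\mathcal{F}\|_1$ (equivalently, $\lambda_{\max}(\mathcal{F})\le\operatorname{tr}(\mathcal{F})=\|\mathcal{F}\|_1$ under the diagonal approximation). Combining with the previous step yields $\alpha\text{-sharpness}\propto\alpha^2\|\mathcal{F}\|_1/\big(2(1+\mathcal{L}(\Theta))\big)$, which is the claim.

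The main obstacle is primarily bookkeeping rather than deep: one has to justify dropping the higher-order Taylor remainder (a standard $o(\alpha^2)$ argument around a twice-differentiable local minimum), and one has to be careful that the passage from the expected Hessian in Lemma \ref{lm:1} to the empirical Hessian appearing in the expansion is warranted in the regime considered, which is why the conclusion is stated with $\propto$ rather than $=$. The only genuinely non-trivial step is the relaxation from the tight spectral maximum $\alpha^2\lambda_{\max}(\mathcal{F})$ to the looser, diagonal-friendly quantity $\alpha^2\|\mathcal{F}\|_1$, but this is exactly what matches the computationally accessible FIM approximation used throughout the paper.
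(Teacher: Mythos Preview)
Your proposal is correct and follows essentially the same route as the paper: Taylor-expand to second order, drop the linear term at a local minimum, reduce the numerator to $\tfrac{\alpha^2}{2}\lambda_{\max}$ of the Hessian, bound $\lambda_{\max}$ by the trace (i.e., the entrywise $1$-norm of the diagonal), and invoke Lemma~\ref{lm:1} to pass from the Hessian to $\mathcal{F}$. The only cosmetic difference is ordering: the paper first bounds $\lambda_{\max}(\nabla^2\mathcal{L})\le\|\operatorname{diag}(\nabla^2\mathcal{L})\|_1$ using positive semidefiniteness at the minimum and only then applies Lemma~\ref{lm:1} under the diagonal approximation, whereas you substitute $\mathcal{F}$ for the Hessian first and then bound the resulting diagonal quadratic form; both arrive at the same inequality.
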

\begin{proof}
    Please refer to Appendix \ref{appendix:proof}.
\end{proof}


It is worth reiterating that a lower sharpness often implies higher generalizability \cite{keskar2016large,zhang2024implicit}. Hence, according to Corollary \ref{corollary:sharp}, we can conclude: 
\vspace{0.2cm}
\begin{prop}\label{prop:1}
    (\textbf{Achievability.}) With the same training loss, appropriately constraining the Fisher information can potentially converge to flatter minima and result in better generalizability.
\end{prop}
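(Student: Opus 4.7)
My plan is to derive the proposition almost immediately from Corollary~\ref{corollary:sharp} combined with the established flatness-versus-generalization link, so the bulk of the heavy lifting is already in place. I would compare two local minima $\Theta_1$ and $\Theta_2$ of $\mathcal{L}$ achieving the same training loss $\mathcal{L}(\Theta_1) = \mathcal{L}(\Theta_2) = L_0$, where $\Theta_2$ satisfies the Fisher constraint $\|\mathcal{F}(\Theta_2)\|_1 \leq \epsilon$ while $\Theta_1$ does not. First I would invoke Corollary~\ref{corollary:sharp} to write $\alpha\text{-sharpness}(\Theta_i) \propto \alpha^2 \|\mathcal{F}(\Theta_i)\|_1 / [2(1+L_0)]$ for $i\in\{1,2\}$. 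Since the denominator and the ball radius $\alpha$ are identical at both points, the ordering of $\alpha$-sharpness collapses onto the ordering of the Fisher norms.

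Second, I would observe that whenever $\epsilon < \|\mathcal{F}(\Theta_1)\|_1$, the constrained minimum $\Theta_2$ has strictly smaller $\alpha$-sharpness (up to the Taylor approximation underlying Corollary~\ref{corollary:sharp}) and therefore lies in a flatter region of the training loss landscape by the very definition of $\alpha$-sharpness via $\mathcal{B}_2(\alpha,\Theta)$. I would then appeal to the accepted principle that flatter minima are less sensitive to small parameter perturbations; treating a test-time distribution shift as effectively inducing a perturbation of the loss surface (cf.\ Fig.~\ref{fig:local_mini}), this yields a smaller train/test generalization gap, as supported by~\cite{keskar2016large,zhang2024implicit}. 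Combining the two steps gives the achievability claim: among points with equal training loss, the one respecting the Fisher cap sits in a provably flatter, and thus empirically more generalizable, region.

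The main obstacle is conceptual rather than algebraic. Corollary~\ref{corollary:sharp} is a second-order Taylor approximation that is tight only near a local minimum where the gradient vanishes, so the comparison must be framed carefully around genuine stationary points and cannot be invoked at arbitrary iterates during training; I would state this explicitly as a regularity assumption. Moreover, the qualifier \emph{potentially} in the statement reflects that the flatness-to-generalization implication, while well corroborated, is not an unconditional theorem. I would therefore frame the proposition as a chain of two claims—(a)~the Fisher constraint provably caps the second-order sharpness proxy in Corollary~\ref{corollary:sharp}, and (b)~lower sharpness is the established pathway to better generalization—rather than attempting a standalone PAC-Bayes-style generalization bound from scratch, which would require substantially more machinery than the surrounding development warrants.
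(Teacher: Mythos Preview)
Your proposal is correct and mirrors the paper's own justification: the paper derives Proposition~\ref{prop:1} directly from Corollary~\ref{corollary:sharp} together with the cited flatness--generalization principle~\cite{keskar2016large,zhang2024implicit}, without any additional machinery. Your write-up is in fact more explicit than the paper's treatment---you carefully compare two equal-loss minima and flag the Taylor and stationarity caveats---but the logical route is identical.
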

 Now, we are interested in the convergence of the proposed method. We offer the analysis under some mild and common assumptions.
\begin{theorem}
Consider a \( L \)-Lipschitz objective function \( \mathcal{L}(\Theta) \), defined such that for any two points \( \Theta_1 \) and \( \Theta_2 \), the inequality \( \|\nabla \mathcal{L}(\Theta_1) - \nabla \mathcal{L}(\Theta_2)\| \leq L \|\Theta_1 - \Theta_2\| \) holds. By appropriately choosing the learning rate \( \eta \) and the FIC constraint \( \epsilon \), the convergence rate of gradient descent can be expressed as \( \mathcal{O}(\frac{1}{T}) \), where \( T \) represents the number of iterations.

\end{theorem}

\begin{proof}
    Please refer to Appendix \ref{appendix:proof}.
\end{proof}

\begin{remark}
This suggests the empirical convergence time may be a little slow due to the constraint, while asymptotically, the order of the theoretical convergence rate is not explicitly related to the constraint w.r.t $T$.
\end{remark}

\section{Experiment}

\noindent\textbf{Setup.} To thoroughly evaluate our method, we conducted experiments on both MTSC and UTSC tasks. Specifically, we utilized the UEA multivariate datasets (30 datasets) \cite{bagnall2018uea} and the UCR univariate datasets (85 datasets) \cite{UCRArchive}, which are among the most comprehensive collections in the field. These datasets involve a wide range of applications, including but not limited to traffic management, human activity recognition, sensor data interpretation, healthcare, and complex system monitoring. A summary of the datasets is shown in Table \ref{tab:DATASET}. Please refer to Appendix \ref{appendix:dataset} for more details about the datasets. The data pre-processing follows \cite{wu2022timesnet,foumani2024improving}.

\begin{table}[htbp]
\centering
\caption{A summary of UEA and UCR datasets.}
\label{tab:DATASET}
\resizebox{\linewidth}{!}{%
\begin{tabular}{ccccccc}\toprule
\textbf{Dataset} & \textbf{Statistic} & \textbf{\# of Variates} & \textbf{Length} & \textbf{Training Size} & \textbf{Test Size} & \textbf{\# of classes} \\ \midrule
\multirow{2}{*}{\textbf{UEA 30}} & \textbf{min} & 2 & 8 & 12 & 15 & 2 \\
 & \textbf{max} & 1345 & 17984 & 30000 & 20000 & 39 \\ \midrule
\multirow{2}{*}{\textbf{UCR 85}} & \textbf{min} & 1 & 24 & 16 & 20 & 2 \\
 & \textbf{max} & 1 & 2709 & 8926 & 8236 & 60 \\ \bottomrule
\end{tabular}%
}
\end{table}

\noindent\textbf{Baselines.} We compare our method with multiple alternative methods, including recent advanced representation learning-based approaches: TsLaNet \cite{eldele2024tslanet}, GPT4TS \cite{zhou2023one}, TimesNet \cite{wu2022timesnet}, PatchTST \cite{nie2023a}, Crossformer \cite{zhang2023crossformer}, TS-TCC \cite{eldele2021time}, and including classification-specific approaches: ROCKET \cite{dempster2020rocket}, InceptionTime (ITime) \cite{ismail2020inceptiontime}, ConvTran \cite{foumani2024improving} (MTSC only), MILLET \cite{anonymous2024inherently}, TodyNet \cite{liu2024todynet} (MTSC only), HC2 \cite{middlehurst2021hive} (UTSC only), and Hydra-MR \cite{dempster_etal_2023} (UTSC only). Finally, we involve the model with a single linear layer for comparison. 




\noindent\textbf{Implementation.} 
As a proof-of-concept, we implement our algorithm on a network based on ITime \cite{ismail2020inceptiontime}, while we modify the original one linear classier to a two-layer MLP. 
We apply two hyperparameter strategies for our implementation: (i) \textbf{Uni.}: For the sake of the robustness and rigor of our method, we apply universal hyperparameters for all datasets in this model. We fixed the  $\epsilon$ to 2 and mini-batch size to 64 for UEA datasets and 16 for UCR datasets, and (ii) \textbf{Full}: To fully explore the ability of our method, we perform a grid search for hyperparameters for each dataset. Specifically, we search the mini-batch size from $\{16,32,64,128\}$ and $\epsilon$ from $\{2,4,10,20\}$. We place their performance aside from the main results. Please refer to Appendix \ref{appendix:Implementation} for more details about the implementation.




\begin{table*}[!t]
\centering
\caption{Comparison of classification accuracy by our method with recent alternative methods for multivariate time series classification (26 UEA datasets). We highlight the best results in \textcolor{red}{\textbf{bold}} and the second-best results with  \textcolor{blue}{\underline{underlining}}.
\textit{Uni.} indicates we set the consistent hyperparameters for all datasets, while \textit{Full} indicates we perform dataset-wise grid search for hyperparameters. To ensure the robustness and rigor of our approach, we included \textit{Full} at the end but did not participate in the comparison. The accuracy is averaged over 5 runs. }
\label{tab:result1}
\resizebox{\textwidth}{!}{%
\begin{tabular}{l|
>{\columncolor[HTML]{EFEFEF}}ccccccccccccc|c}\toprule
\multicolumn{1}{c}{\textbf{Dataset}} & \textbf{\begin{tabular}[c]{@{}c@{}}Ours\\    (Uni.)\end{tabular}} & \textbf{\begin{tabular}[c]{@{}c@{}}MILLET\\      ICLR'24\end{tabular}} & \textbf{\begin{tabular}[c]{@{}c@{}}TodyNet\\      Inf. Sci.'24\end{tabular}} & \textbf{\begin{tabular}[c]{@{}c@{}}ConvTran\\      ECML'23\end{tabular}} & \textbf{\begin{tabular}[c]{@{}c@{}}ROCKET\\      ECML'20\end{tabular}}  & \textbf{\begin{tabular}[c]{@{}c@{}}ITime\\      DMKD'20\end{tabular}}  & \textbf{\begin{tabular}[c]{@{}c@{}}TsLaNet\\      ICML'24\end{tabular}} & \textbf{\begin{tabular}[c]{@{}c@{}}GPT4TS\\      NeurIPS'23\end{tabular}} & \textbf{\begin{tabular}[c]{@{}c@{}}TimesNet\\      ICLR'23\end{tabular}} & \textbf{\begin{tabular}[c]{@{}c@{}}CrossFormer\\      ICLR'23\end{tabular}} & \textbf{\begin{tabular}[c]{@{}c@{}}PatchTST\\      ICLR'23\end{tabular}} & \textbf{\begin{tabular}[c]{@{}c@{}}TS-TCC\\      IJCAI'21\end{tabular}} & \textbf{Linear} & {\color[HTML]{455266} \textbf{\begin{tabular}[c]{@{}c@{}}Ours\\ (Full)\end{tabular}}} \\ \midrule
ArticularyWordRecognition & {\color[HTML]{FF0000} \textbf{99.3}} & 99.0 & 98.2 & 98.3 & {\color[HTML]{FF0000} \textbf{99.3}} &98.5 & 99.0 & 93.3 & 96.2 & 98.0 & 97.7 & 98.0 & 93.1 & {\color[HTML]{455266} 99.8} \\
AtrialFibrillation & {\color[HTML]{FF0000} \textbf{56.7}} & 43.3 & 46.7 & 40.0 & 20.0 &44.0 & 40.0 & 33.3 & 33.3 & 46.7 & {\color[HTML]{0000FF} {\ul 53.3}} & 33.3 & 46.7 & {\color[HTML]{455266} 60.0} \\
BasicMotions & {\color[HTML]{FF0000} \textbf{100.0}} & {\color[HTML]{FF0000} \textbf{100.0}} & {\color[HTML]{FF0000} \textbf{100.0}} & {\color[HTML]{FF0000} \textbf{100.0}} & {\color[HTML]{FF0000} \textbf{100.0}}& {\color[HTML]{FF0000} \textbf{100.0}}   & {\color[HTML]{FF0000} \textbf{100.0}}& 92.5 & {\color[HTML]{FF0000} \textbf{100.0}} & 90.0 & 92.5 & {\color[HTML]{FF0000} \textbf{100.0}} & 85.0 & {\color[HTML]{455266} 100.0} \\
Cricket & {\color[HTML]{FF0000} \textbf{100.0}} & {\color[HTML]{FF0000} \textbf{100.0}} & 98.6 & {\color[HTML]{FF0000} \textbf{100.0}} & 98.6&98.6  & 98.6 & 8.3 & 87.5 & 84.7 & 84.7 & 93.1 & 91.7 & {\color[HTML]{455266} 100.0} \\
Epilepsy & {\color[HTML]{FF0000} \textbf{98.9}} & {\color[HTML]{0000FF} {\ul 98.6}} & 96.4 & 98.6 & 98.6&97.2  & 98.6 & 85.5 & 78.1 & 73.2 & 65.9 & 97.1 & 60.1 & {\color[HTML]{455266} 100.0} \\
EthanolConcentration & 39.2 & 32.3 & {\color[HTML]{0000FF} {\ul 42.4}} & 36.1 & \textbf{{\color[HTML]{FF0000} 42.6}} &36.3& 30.4 & 25.5 & 27.7 & 35.0 & 28.9 & 32.3 & 33.5 & {\color[HTML]{455266} 40.1} \\
FaceDetection & 68.4 & {\color[HTML]{FF0000} \textbf{69.2}} & 67.7 & 67.2 & 64.7&66.6  & 66.8 & 65.6 & 67.5 & 66.2 & {\color[HTML]{0000FF} {\ul 69.0}} & 63.1 & 67.4 & {\color[HTML]{455266} 69.2} \\
FingerMovements & {\color[HTML]{FF0000} \textbf{65.0}} & {\color[HTML]{0000FF} {\ul 64.0}} & 62.5 & 56.0 & 61.0 &60.0 & 61.0 & 57.0 & 59.4 & {\color[HTML]{0000FF} {\ul 64.0}} & 62.0 & 44.0 & {\color[HTML]{0000FF} {\ul 64.0}} & {\color[HTML]{455266} 71.5} \\
HandMovementDirection & 49.3 & 50.7 & 54.1 & 40.5 & 50.0&44.9  & 52.7 & 18.9 & 50.0 & {\color[HTML]{0000FF} {\ul 58.1}} & {\color[HTML]{0000FF} {\ul 58.1}} & {\color[HTML]{FF0000} \textbf{64.9}} & {\color[HTML]{0000FF} {\ul 58.1}} & {\color[HTML]{455266} 54.7} \\
Handwriting & {\color[HTML]{0000FF} {\ul 61.6}} & {\color[HTML]{FF0000} \textbf{69.8}} & 47.9 & 37.5 & 48.5 &60.1  & 57.9 & 3.8 & 26.2 & 26.2 & 26.0 & 47.8 & 22.5 & {\color[HTML]{455266} 73.0} \\
Heartbeat & {\color[HTML]{FF0000} \textbf{81.0}} & 76.8 & {\color[HTML]{0000FF} {\ul 79.0}} & 78.5 & 69.8 &78.8 & 77.6 & 36.6 & 74.5 & 76.6 & 76.6 & 77.1 & 73.2 & {\color[HTML]{455266} 82.0} \\
InsectWingbeat & 71.1 & {\color[HTML]{FF0000} \textbf{72.0}} & 63.0 & {\color[HTML]{0000FF} {\ul 71.3}} & 41.8&70.9  & 10.0 & 10.0 & 10.0 & 10.0 & 10.0 & 10.0 & 10.0 & {\color[HTML]{455266} 72.3} \\
Japanese Vowels & 99.1 & {\color[HTML]{FF0000} \textbf{99.6}} & 97.6 & 98.9 & 95.7&97.6  & {\color[HTML]{0000FF} {\ul 99.2}} & 98.1 & 97.8 & 98.9 & 98.7 & 97.3 & 97.8 & {\color[HTML]{455266} 99.6} \\
Libras & 79.4 & 91.1 & 84.4 & {\color[HTML]{0000FF} {\ul 92.8}} & 83.9&68.1  & {\color[HTML]{FF0000} \textbf{92.8}} & 79.4 & 77.8 & 76.1 & 81.1 & 86.7 & 73.3 & {\color[HTML]{455266} 91.7} \\
LSST & 65.3 & 50.2 & 65.1 & 61.6 & 54.1&52.9  & {\color[HTML]{0000FF} {\ul 66.3}} & 46.4 & 59.2 & 42.8 & {\color[HTML]{FF0000} \textbf{67.8}} & 49.2 & 35.8 & {\color[HTML]{455266} 66.8} \\
MotorImagery & {\color[HTML]{FF0000} \textbf{65.0}} & 61.5 & {\color[HTML]{0000FF} {\ul 64.0}} & 56.0 & 53.0 &60.2 & 62.0 & 50.0 & 51.0 & 61.0 & 61.0 & 47.0 & 61.0 & {\color[HTML]{455266} 68.5} \\
NATOPS & {\color[HTML]{FF0000} \textbf{98.9}} & {\color[HTML]{0000FF} {\ul 98.3}} & 95.6 & 94.4 & 83.3&96.8  & 95.6 & 91.7 & 81.8 & 88.3 & 96.7 & 96.1 & 93.9 & {\color[HTML]{455266} 99.2} \\
PEMS-SF & 79.2 & 63.0 & {\color[HTML]{FF0000} \textbf{96.1}} & 82.8 & 75.1 &60.4 & 83.8 & 87.3 & 88.1 & 82.1 & {\color[HTML]{0000FF} {\ul 88.4}} & 86.7 & 82.1 & {\color[HTML]{455266} 87.9} \\
PenDigits & 97.6 & 92.6 & 98.7 & 98.7 & 97.3&95.7  & {\color[HTML]{0000FF} {\ul 98.9}} & 97.7 & 98.2 & 93.7 & {\color[HTML]{FF0000} \textbf{99.2}} & 98.5 & 92.9 & {\color[HTML]{455266} 98.5} \\
PhonemeSpectra & {\color[HTML]{0000FF} {\ul 31.3}} & {\color[HTML]{FF0000} \textbf{34.3}} & 31.2 & 30.6 & 17.6&30.1  & 17.8 & 3.0 & 18.2 & 7.6 & 11.7 & 25.9 & 7.1 & {\color[HTML]{455266} 32.3} \\
RacketSports & {\color[HTML]{0000FF} {\ul 89.8}} & 88.2 & 84.5 & 86.2 & 86.2 &87.9 & {\color[HTML]{FF0000} \textbf{90.8}} & 77.0 & 82.6 & 81.6 & 84.2 & 84.9 & 79.0 & {\color[HTML]{455266} 90.8} \\
SelfRegulationSCP1 & 90.1 & 88.1 & 90.4 & 91.8 & 84.6&88.1  & {\color[HTML]{0000FF} {\ul 91.8}} & 91.5 & 77.4 & {\color[HTML]{FF0000} \textbf{92.5}} & 89.8 & 91.1 & 88.4 & {\color[HTML]{455266} 90.3} \\
SelfRegulationSCP2 & 59.4 & {\color[HTML]{0000FF} {\ul 59.4}} & 59.4 & 58.3 & 54.4&56.4  & {\color[HTML]{FF0000} \textbf{61.7}} & 51.7 & 52.8 & 53.3 & 54.4 & 53.9 & 51.7 & {\color[HTML]{455266} 60.8} \\
SpokenArabicDigits & {\color[HTML]{FF0000} \textbf{99.9}} & {\color[HTML]{FF0000} \textbf{99.9}} & 99.1 & 99.5 & 99.2&99.7  & {\color[HTML]{FF0000} \textbf{99.9}} & 99.4 & 98.4 & 96.4 & 99.7 & 99.8 & 96.7 & {\color[HTML]{455266} 100.0} \\
StandWalkJump & {\color[HTML]{FF0000} \textbf{63.3}} & 53.3 & 36.7 & 33.3 & 46.7&49.3  & 46.7 & 33.3 & 53.3 & 53.3 & {\color[HTML]{0000FF} {\ul 60.0}} & 40.0 & {\color[HTML]{0000FF} {\ul 60.0}} & {\color[HTML]{455266} 66.7} \\
UWaveGestureLibrary & 90.2 & 90.8 & 86.3 & 89.1 & {\color[HTML]{FF0000} \textbf{94.4}}&84.8  & {\color[HTML]{0000FF} {\ul 91.3}} & 84.4 & 83.1 & 81.6 & 80.0 & 86.3 & 81.9 & {\color[HTML]{455266} 93.0} \\ \midrule
\textbf{Avg.} & {\color[HTML]{FF0000} \textbf{76.9}} & {\color[HTML]{0000FF} \ul {74.8}} & {\color[HTML]{0000FF} \ul {74.8}} & 73.0 & 70.0 &72.5 & 72.7 & 58.5 & 66.6 & 66.8 & 69.1 & 69.4 & 65.6 & {\color[HTML]{455266} 79.6} \\ \bottomrule
\end{tabular}%
}
\end{table*}


\begin{table*}[htbp]
\centering
\caption{Comparison of classification accuracy by our method with recent alternative methods for univariate time series classification (85 UCR datasets). We highlight the best results in \textcolor{red}{\textbf{bold}} and the second-best results with  \textcolor{blue}{\underline{underlining}}. Please see the full results in Appendix \ref{appendix:Full Results}. }
\label{tab:result-uni}
\resizebox{\textwidth}{!}{%
\begin{tabular}{l
>{\columncolor[HTML]{EFEFEF}}c cccccccccccc|c}\toprule
\textbf{Method} & \textbf{\begin{tabular}[c]{@{}c@{}}Ours\\ (Uni.)\end{tabular}} & \textbf{MILLET} & \textbf{HC2} & \textbf{Hydra-MR} & \textbf{ITime} & \textbf{ROCKET} & \textbf{TsLaNet} & \textbf{GPT4TS} & \textbf{TimesNet} & \textbf{CrossFormer} & \textbf{PatchTST} & \textbf{TS-TCC} & \textbf{Linear} & {\color[HTML]{455266} \textbf{\begin{tabular}[c]{@{}c@{}}Ours\\ (Full)\end{tabular}}} \\ \midrule
\textbf{AVg.} & {\color[HTML]{FF0000} \textbf{86.2}} & 85.6 & {\color[HTML]{0000FF} \underline{86.0}} & 85.7 & 85.6 & 83.1 & 83.2 & 61.6 & 65.3 & 73.4 & 71.8 & 75.1 & 69.7 &  {\color[HTML]{455266} 87.3}\\ \bottomrule
\end{tabular}%
}
\end{table*}

\begin{table}[htbp]
\centering
\caption{Classification results obtained using our method, evaluated across a comprehensive set of metrics. \textit{Acc.}: Accuracy, \textit{Bal. Acc.}: Balanced Accuracy,  \textit{F1}: F1 score, \textit{P}: Precision, \textit{R}: Recall. }
\label{tab:results-metrics}
\resizebox{0.9\linewidth}{!}{%
\begin{tabular}{ccccccc}\toprule
\textbf{Dataset} & \textbf{Model} & \textbf{Acc.} & \textbf{Bal. Acc.} & \textbf{F1} & \textbf{P} & \textbf{R} \\ \midrule
\multirow{2}{*}{\textbf{UEA 30}} & \textbf{Uni.} & 78.0 & 76.7 & 76.3 & 78.0 & 76.7 \\
 & \textbf{Full} & 80.9 & 79.7 & 79.3 & 80.7 & 79.7 \\ \midrule
\multirow{2}{*}{\textbf{UCR 85}} & \textbf{Uni.} & 86.2 & 83.4 & 83.3 & 85.2 & 83.4 \\
 & \textbf{Full} & 87.3 & 84.8 & 84.8 & 86.4 & 84.8 \\ \bottomrule
\end{tabular}%
}
\vspace{-0.3cm}
\end{table}

\noindent\textbf{Main Results.} We report results on UEA multivariate datasets in Table \ref{tab:result1} averaged over 5 runs.
It is worth mentioning that this result only includes 26 datasets since several methods (following TsLaNet~\cite{eldele2024tslanet}) are not able to process the remaining datasets due to memory or computational issues. However, our method does not have any obstacles to learning on these datasets, and hence, we report the comprehensive results with additional metrics (i.e., balanced precision, F1 score, precision, recall) in Table \ref{tab:results-metrics} and Appendix \ref{appendix:Full Results}. It is known that adjusting hyperparameters is crucial for maintaining optimal performance across diverse datasets, and we note that this strategy is employed by several models, such as TodyNet and TimesNet. However, the main observation is that even with universal hyperparameters, our method can achieve the best overall performance and obtain a gain over previous SOTA methods (TodyNet and MILLET) with an average of 2.1\% accuracy improvement. 

We also present the single-tailed Wilcoxon signed rank test to compare our methods against the most competitive candidates (\textbf{MILLET} and \textbf{TodyNet}) in the following table, which confirms the statistical superiority. 
More surprisingly, our full model can achieve an average accuracy of over 79.6\%, with an additional 2.7\% boosting of our model with universal hyperparameters. 
\begin{wraptable}[5]{r}{0.5\linewidth}
\centering
\label{tab:sts_main}
\resizebox{0.99\linewidth}{!}{%
\begin{tabular}{c|c}\toprule
Comparison  & p-Value \\ \midrule
\textbf{Uni.} vs \textbf{MILLET} &  0.039 \\
\textbf{Uni.} vs \textbf{TodyNet} &  0.020 \\ \bottomrule
\end{tabular}%
}
\end{wraptable}

A similar conclusion can be reached on UCR univariate datasets as the results presented in Table \ref{tab:result-uni}, where we achieve a 0.2\% gain over the previous SOTA method (HC2). Our full model can obtain a 1\% further enhancement over our method with universal hyperparameters.
This is reasonably significant as we evaluate models across 85 datasets as optimal hyperparameters in different datasets can be various. We also realize that HC2 is an ensemble method of multiple different classifiers with expensive computational costs. Therefore, our method not only improves performance but also offers a more efficient solution.
In conclusion, all these results highlight the effectiveness of our method.

\vspace{-0.25\baselineskip} 

\section{Model Analysis}

In this section, we will delve into the analysis of our model from several perspectives. We will use the ten datasets as the same as those used in Section \ref{sec:inv} since they are commonly selected by previous works  \cite{zerveas2021transformer,wu2022timesnet}. We use the model with the universal hyperparameters in the following investigations.

\noindent\textbf{Ablation Analysis.} We first investigate the effectiveness of our methods over baseline in Fig. \ref{fig:aba}. It is evident that training with our proposed constraint can obtain accuracy gain in all datasets. Particularly, we observe that even though there are fluctuations on some datasets, setting $\epsilon$ to 2 can obtain a relatively considerable gain with a 4\% average improvement. We include statistical test in Appendix \ref{appendix:stats2}.


\begin{figure}[htbp]
    \centering
    \includegraphics[width=0.95\linewidth]{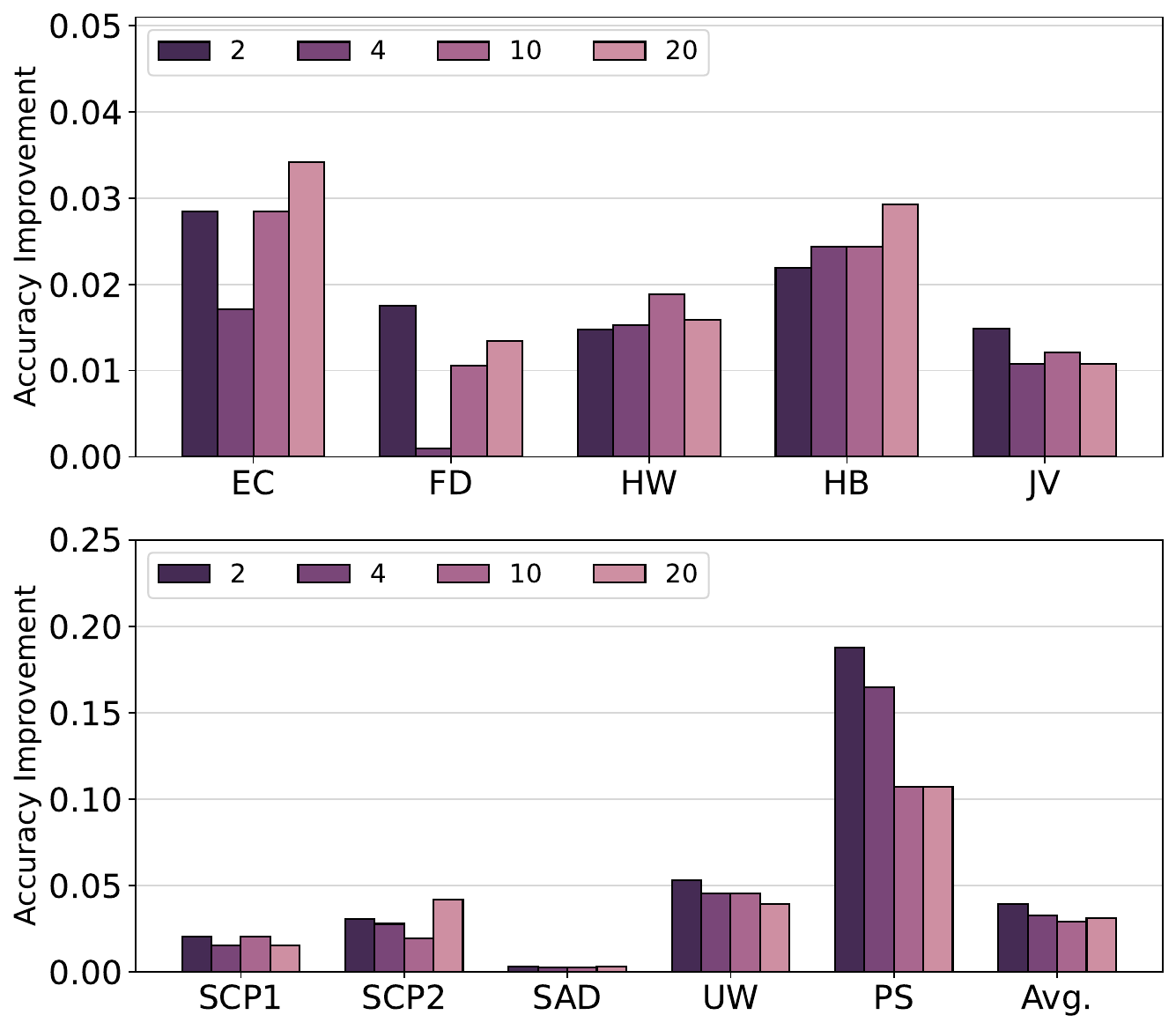}
    \vspace{-0.2cm}
    \caption{The accuracy improvement of our method over the baseline (i.e. standard training without using any constraint). We present the average accuracy improvement at the end. }
    \label{fig:aba}
\end{figure}

\begin{figure}[htbp]
    \centering
    \includegraphics[width=0.95\linewidth]{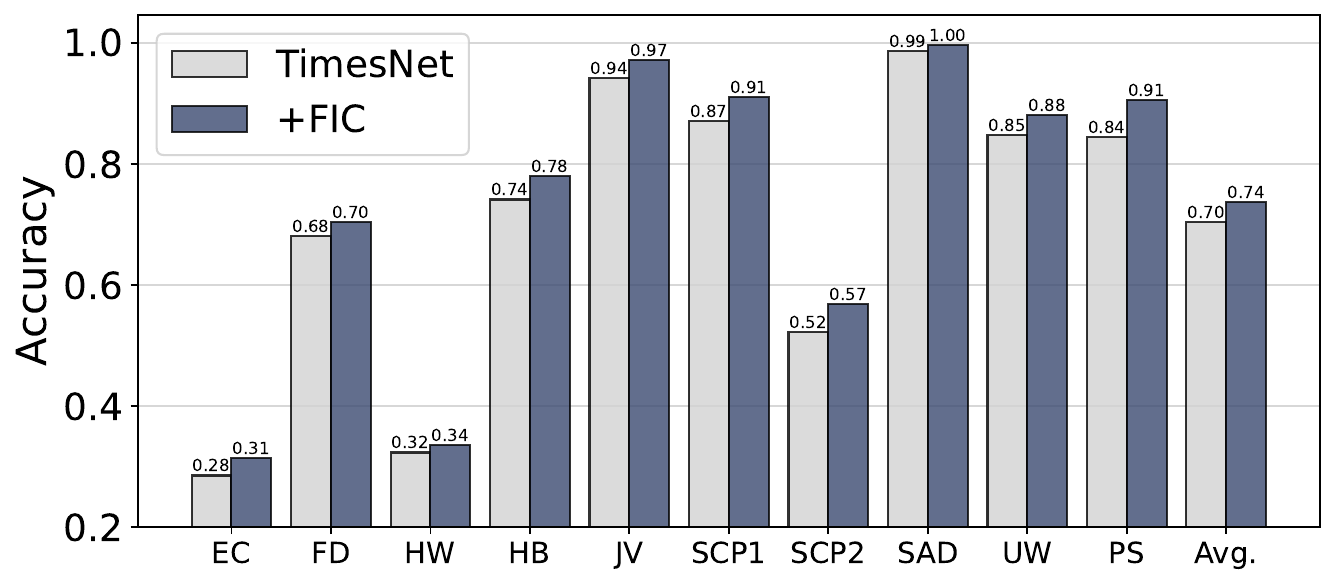}
    \vspace{-0.4cm}
    \caption{Comparison of the classification accuracy between original TimesNet and it after applying our proposed method.   }
    \label{fig:TimesNet}
\end{figure}

\begin{figure}[htbp]
    \centering
    \includegraphics[width=0.95\linewidth]{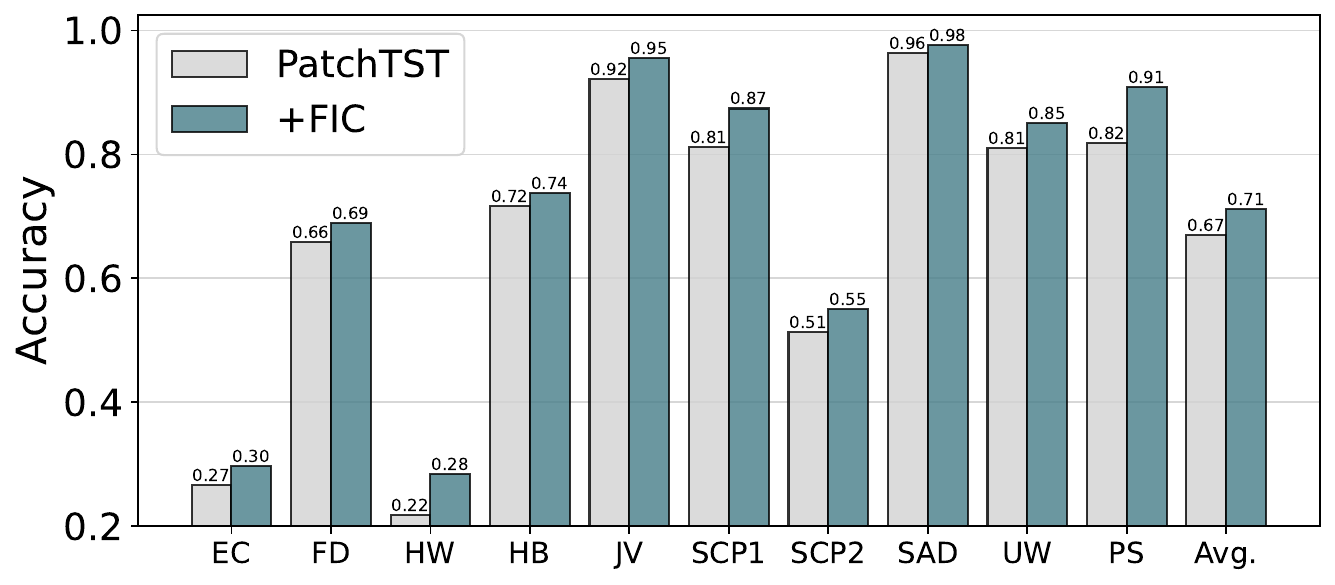}
    \vspace{-0.4cm}
    \caption{Comparison of the classification accuracy between original PatchTST and it after applying our proposed method.   }
    \label{fig:PatchTST}
\end{figure}


\begin{figure}[htbp]
    \centering
    \includegraphics[width=0.95\linewidth]{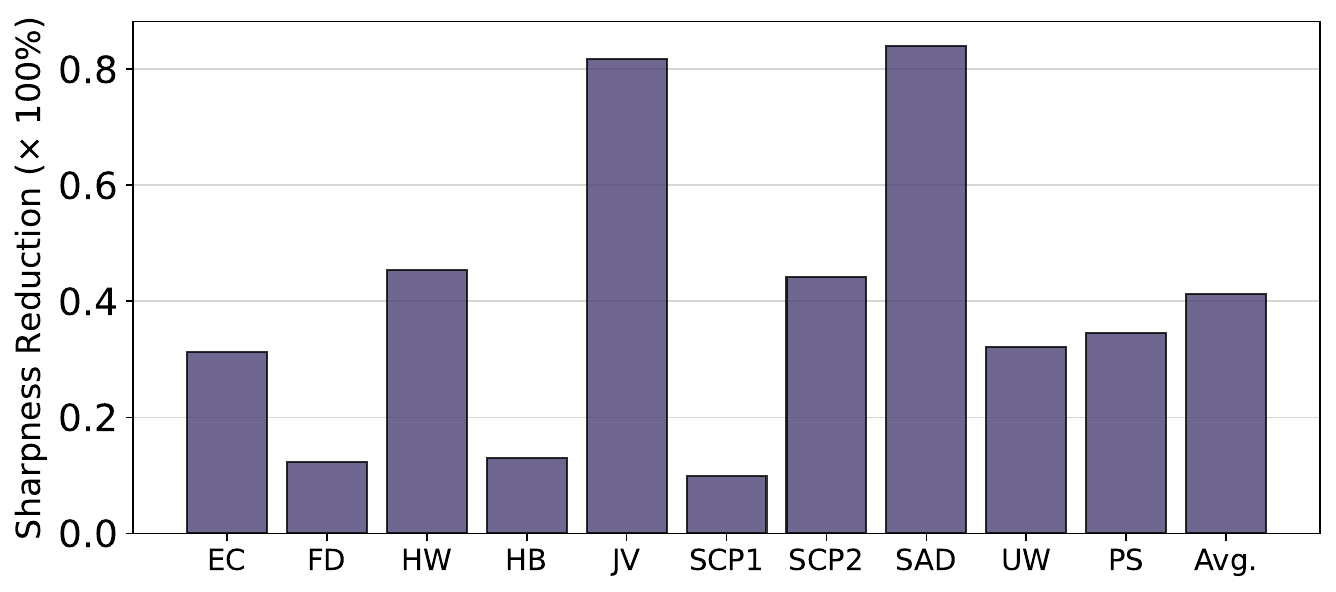}
    \vspace{-0.4cm}
    \caption{The \textbf{sharpness reduction} by applying our method. }
    \label{fig:sharp}
\end{figure}

\noindent\textbf{Case Study on TimesNet and PatchTST.}  We conduct a case study on TimesNet \cite{nie2023a} and PatchTST \cite{wu2022timesnet} to illustrate the proposed FIC can be generalized to other models. We implement it using the open source code\footnote{\url{https://github.com/thuml/Time-Series-Library}} with default hyperparameters. The results shown in Figs. \ref{fig:TimesNet} and \ref{fig:PatchTST}, where both models can obtain around 4\% gain in accuracy. This confirms the effectiveness of our method.

\noindent\textbf{Landscape Analysis.} Here, we validate the change in the landscape caused by our method. Specifically, we compute the sharpness (Eq. \ref{eq:sharp}) between our and the baseline models after well training. We reiterate here that a lower sharpness often implies a better generalization. We show the results in  Fig. \ref{fig:sharp}, which confirms that using our method can obtain an average 40\% reduction in the sharpness across all datasets. We illustrate this by visualizing the landscape in Fig. \ref{fig:landscape}.

\begin{figure}[htbp]
    \centering
    \includegraphics[width=0.99\linewidth]{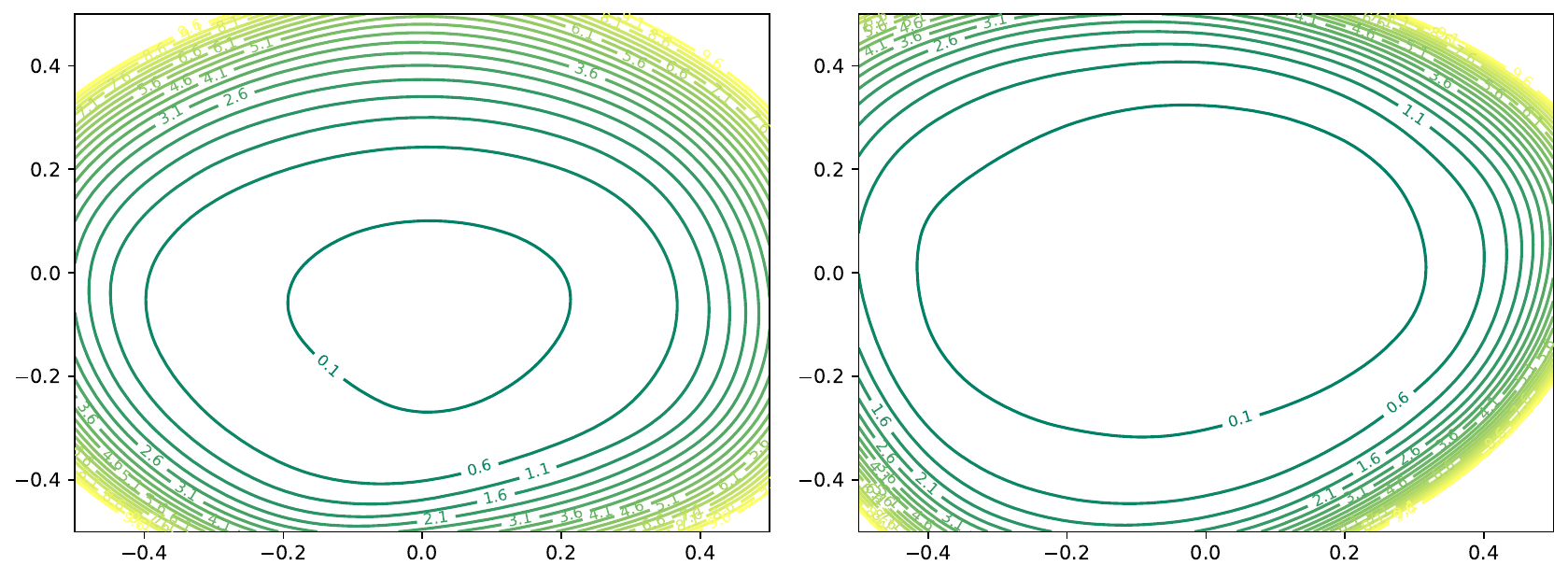}
    \vspace{-0.4cm}
    \caption{The landscape (error contour lines) of the well-trained network by \textbf{\underline{baseline (Left)}} and \textbf{\underline{Ours (Right)}}. The contour lines of our method are more spread out around the center compared to the baseline, indicating a flatter region. }
    \label{fig:landscape}
\end{figure}
\noindent\textbf{Comparison with SAM.} We also want to highlight the advantage of our method over previous sharpness-aware minimization (SAM) \cite{foret2020sharpness,ilbertsamformer}. For the sake of the fair test, we fix the mini-batch size to 64 for both cases. We evaluate the accuracy and the runtime as shown in Table \ref{tab:sam}. This result demonstrates that our method improves accuracy by 2.6\% and enhances efficiency, reducing the computation time by half.

\begin{table}[!t]
\centering
\caption{Comparison of classification accuracy and runtime per iteration by SAM and our method. We fixed both mini-batch sizes to 64 for a fair comparison. A statistical test is conducted in Appendix \ref{appendix:stats3}.}
\label{tab:sam}
\resizebox{0.7\linewidth}{!}{%
\begin{tabular}{c|cc|cc}\toprule
\textbf{Metrics} & \multicolumn{2}{c}{\textbf{Accuracy}($\uparrow$)} & \multicolumn{2}{c}{\textbf{Runtime (s)($\downarrow$)}} \\ \midrule
\textbf{Dataset} & \textbf{SAM} & \cellcolor[HTML]{EFEFEF}\textbf{Ours} & \textbf{SAM} & \cellcolor[HTML]{EFEFEF}\textbf{Ours} \\ \midrule
\textbf{EC} & 36.1 & \cellcolor[HTML]{EFEFEF}39.2 & 0.103 & \cellcolor[HTML]{EFEFEF}0.046 \\
\textbf{FD} & 61.0 & \cellcolor[HTML]{EFEFEF}68.4 & 0.057 & \cellcolor[HTML]{EFEFEF}0.036 \\
\textbf{HW} & 58.6 & \cellcolor[HTML]{EFEFEF}61.6 & 0.468 & \cellcolor[HTML]{EFEFEF}0.182 \\
\textbf{HB} & 79.0 & \cellcolor[HTML]{EFEFEF}81.0 & 0.050 & \cellcolor[HTML]{EFEFEF}0.032 \\
\textbf{JV} & 98.7 & \cellcolor[HTML]{EFEFEF}99.1 & 0.045 & \cellcolor[HTML]{EFEFEF}0.028 \\
\textbf{SCP1} & 88.1 & \cellcolor[HTML]{EFEFEF}90.1 & 0.061 & \cellcolor[HTML]{EFEFEF}0.032 \\
\textbf{SCP2} & 60.6 & \cellcolor[HTML]{EFEFEF}59.4 & 0.072 & \cellcolor[HTML]{EFEFEF}0.036 \\
\textbf{SAD} & 99.9 & \cellcolor[HTML]{EFEFEF}100.0 & 0.203 & \cellcolor[HTML]{EFEFEF}0.091 \\
\textbf{UW} & 89.7 & \cellcolor[HTML]{EFEFEF}90.2 & 0.048 & \cellcolor[HTML]{EFEFEF}0.028 \\
\textbf{PS} & 69.9 & \cellcolor[HTML]{EFEFEF}79.2 & 0.472 & \cellcolor[HTML]{EFEFEF}0.185 \\  \midrule
\textbf{Avg.} & 74.2 & \cellcolor[HTML]{EFEFEF}\textbf{76.8} & 0.158 & \cellcolor[HTML]{EFEFEF}\textbf{0.070} \\ \bottomrule
\end{tabular}%
}
\end{table}

\noindent\textbf{Explicit Domain Shift Scenario.} Here, we adopt a realistic evaluation scenario. In real-world healthcare applications, datasets are commonly partitioned by patient, such that individuals included in the training set are distinct from those in the testing set. This setup introduces an explicit domain shift, as highlighted by \citet{wang2024evaluate}, due to inter-patient differences in physiological characteristics, signal noise, and device-specific variations. 
We select four popular publicly available datasets: TDBrain~\cite{van2022two}, ADFTD~\cite{miltiadous2023dataset,miltiadous2023dice}, PTB-XL~\cite{wagner2020ptb}, and SleepEDF~\cite{kemp2000analysis}. As shown in Table \ref{tab:ecg}, our method can consistently obtain improvement over the baseline, and can outperform  Medformer~\cite{wang2024medformer}, a recent SOTA method, in most cases.

\noindent\textbf{Discussion.} We have included additional discussion on model analysis. please refer to Appendix \ref{appendix:discuss}.




\begin{table}[]
\caption{Comparison of classification accuracy on healthcare datasets with explicit domain shift. We include Medformer~\cite{wang2024medformer}, a recent SOTA method, for reference.}
\label{tab:ecg}
\resizebox{\linewidth}{!}{%
\begin{tabular}{l|lccccc}\toprule
\multicolumn{1}{l}{\textbf{Dataset}} & \textbf{Method}             & \textbf{Acc.}                        & \textbf{Bal. Acc.}      & \textbf{F1}                          & \textbf{P}                  & \textbf{R}                           \\ \midrule
                                     & \textbf{Baseline}  & 93.0                                 & 93.0                    & 93.0                                 & 93.4                        & 93.0                                 \\
                                     & \cellcolor[HTML]{EFEFEF}\textbf{+FIC}      & \cellcolor[HTML]{EFEFEF}\textbf{96.2}                        & \cellcolor[HTML]{EFEFEF}\textbf{96.2}           & \cellcolor[HTML]{EFEFEF}\textbf{96.2}                        & \cellcolor[HTML]{EFEFEF}\textbf{96.3}               & \cellcolor[HTML]{EFEFEF}\textbf{96.2}                        \\
\multirow{-3}{*}{\textbf{TDBrain}}            & \textbf{Medformer} & { 89.6}          & {--} & { 89.6}        &  { 89.7} & { 89.6}          \\  \midrule
                                     & \textbf{Baseline}  & 46.0                                 & 43.9                    & 44.0                                 & 44.1                        & 43.9                                 \\
                                     & \cellcolor[HTML]{EFEFEF}\textbf{+FIC}      & \cellcolor[HTML]{EFEFEF}52.8                                 & \cellcolor[HTML]{EFEFEF}49.9                    & \cellcolor[HTML]{EFEFEF}48.3                                 & \cellcolor[HTML]{EFEFEF}\textbf{52.3}               & \cellcolor[HTML]{EFEFEF}49.9                                 \\
\multirow{-3}{*}{\textbf{ADFTD}}              & \textbf{Medformer} & { \textbf{53.3}} & {--} & { \textbf{50.7}} & { 51.0} & { \textbf{50.7}} \\ \midrule
                                     & \textbf{Baseline}  & 73.9                                 & 59.2                    & 60.0                                 & 65.4                        & 59.2                                 \\
                                     & \cellcolor[HTML]{EFEFEF}\textbf{+FIC}      & \cellcolor[HTML]{EFEFEF}\textbf{75.1}                        &\cellcolor[HTML]{EFEFEF} \textbf{61.4}           & \cellcolor[HTML]{EFEFEF}\textbf{63.0}                        &\cellcolor[HTML]{EFEFEF} \textbf{68.6}               &\cellcolor[HTML]{EFEFEF} \textbf{61.4}                        \\
\multirow{-3}{*}{\textbf{PTB-XL}}             & \textbf{Medformer} & { 72.9}          & {--} & { 62.0}          & { 64.1} & { 60.6}          \\ \midrule
                                     & \textbf{Baseline}  & 85.1                                 & 74.3                    & 74.8                                 & 76.5                        & 74.3                                 \\
                                     & \cellcolor[HTML]{EFEFEF}\textbf{+FIC}      & \cellcolor[HTML]{EFEFEF}\textbf{86.7}                        & \cellcolor[HTML]{EFEFEF}\textbf{75.5}           & \cellcolor[HTML]{EFEFEF}\textbf{75.5}                        & \cellcolor[HTML]{EFEFEF}\textbf{78.3}               & \cellcolor[HTML]{EFEFEF}\textbf{75.5}                        \\
\multirow{-3}{*}{\textbf{SleepEDF}}           & \textbf{Medformer} & { 82.8}          & {--} & { 71.1}          & { 71.4} & { 74.7} \\  \bottomrule      
\end{tabular}%
}
\end{table}

\section{Conclusion}

In this work, we propose FIC-TS, a novel learning strategy that leverages Fisher information as a constraint to address train/test domain shifts in time series classification. Our approach is both theoretically sound and computationally efficient, with empirical results that strongly support our theoretical insights. Specifically, our method not only achieves superior performance on both univariate and multivariate time series datasets compared to several state-of-the-art methods, but also leads to a notable reduction in sharpness. These observations confirm that constraining Fisher information encourages flatter minima, ultimately enhancing generalization.




\section*{Acknowledgment}
To be included.
\section*{Impact Statements}

This paper proposes FIC-TSC, a novel training framework for time series classification that leverages Fisher information as a constraint. Our approach effectively mitigates the domain shift problem and enhances model generalization by guiding optimization toward flatter minima.

\textbf{(i) Theoretical View.}  
We establish a Fisher Information Constraint (FIC) to regulate training, theoretically linking it to sharpness reduction and generalization under distribution shifts. Our framework maintains competitive convergence rates while improving robustness.

\textbf{(ii) Empirical Validation.} Our empirical analysis strongly aligns with our theoretical findings, confirming that constraining Fisher information effectively reduces sharpness and enhances generalization. 

\textbf{(iii) Applicability.}  
FIC-TSC is designed to be broadly applicable across diverse time series classification tasks. We validate its effectiveness on 30 UEA multivariate and 85 UCR univariate datasets, which span multiple domains, including healthcare, finance, human activity recognition, and industrial monitoring. By outperforming 14 state-of-the-art methods, FIC-TSC demonstrates superior robustness to domain shifts—a crucial capability for real-world deployment.

In summary, this work sets a foundation for integrating Fisher information constraints into time series learning, with broad implications for both theory and practical applications.



\newpage
\bibliography{myref}

\begin{thebibliography}{77}
\providecommand{\natexlab}[1]{#1}
\providecommand{\url}[1]{\texttt{#1}}
\expandafter\ifx\csname urlstyle\endcsname\relax
  \providecommand{\doi}[1]{doi: #1}\else
  \providecommand{\doi}{doi: \begingroup \urlstyle{rm}\Url}\fi

\bibitem[Andriushchenko \& Flammarion(2022)Andriushchenko and Flammarion]{andriushchenko2022towards}
Andriushchenko, M. and Flammarion, N.
\newblock Towards understanding sharpness-aware minimization.
\newblock In \emph{International Conference on Machine Learning}, pp.\  639--668. PMLR, 2022.

\bibitem[Bagnall et~al.(2017)Bagnall, Lines, Bostrom, Large, and Keogh]{bagnall2017great}
Bagnall, A., Lines, J., Bostrom, A., Large, J., and Keogh, E.
\newblock The great time series classification bake off: a review and experimental evaluation of recent algorithmic advances.
\newblock \emph{Data mining and knowledge discovery}, 31:\penalty0 606--660, 2017.

\bibitem[Bagnall et~al.(2018)Bagnall, Dau, Lines, Flynn, Large, Bostrom, Southam, and Keogh]{bagnall2018uea}
Bagnall, A., Dau, H.~A., Lines, J., Flynn, M., Large, J., Bostrom, A., Southam, P., and Keogh, E.
\newblock The uea multivariate time series classification archive, 2018.
\newblock \emph{arXiv preprint arXiv:1811.00075}, 2018.

\bibitem[Berndt \& Clifford(1994)Berndt and Clifford]{berndt1994using}
Berndt, D.~J. and Clifford, J.
\newblock Using dynamic time warping to find patterns in time series.
\newblock In \emph{Proceedings of the 3rd international conference on knowledge discovery and data mining}, pp.\  359--370, 1994.

\bibitem[Chen et~al.(2020)Chen, Fu, Chen, Jin, Cheng, Jin, and Hua]{chen2020homm}
Chen, C., Fu, Z., Chen, Z., Jin, S., Cheng, Z., Jin, X., and Hua, X.-S.
\newblock Homm: Higher-order moment matching for unsupervised domain adaptation.
\newblock In \emph{Proceedings of the AAAI conference on artificial intelligence}, volume~34, pp.\  3422--3429, 2020.

\bibitem[Chen et~al.(2024)Chen, Qiu, Zhu, Li, Wang, Sotiras, Wang, and Razi]{chen2024timemil}
Chen, X., Qiu, P., Zhu, W., Li, H., Wang, H., Sotiras, A., Wang, Y., and Razi, A.
\newblock Timemil: Advancing multivariate time series classification via a time-aware multiple instance learning.
\newblock \emph{arXiv preprint arXiv:2405.03140}, 2024.

\bibitem[Chen et~al.(2015)Chen, Keogh, Hu, Begum, Bagnall, Mueen, and Batista]{UCRArchive}
Chen, Y., Keogh, E., Hu, B., Begum, N., Bagnall, A., Mueen, A., and Batista, G.
\newblock The ucr time series classification archive, July 2015.
\newblock \url{www.cs.ucr.edu/~eamonn/time_series_data/}.

\bibitem[Contreras-Reyes \& Kharazmi(2023)Contreras-Reyes and Kharazmi]{contreras2023belief}
Contreras-Reyes, J.~E. and Kharazmi, O.
\newblock Belief fisher--shannon information plane: Properties, extensions, and applications to time series analysis.
\newblock \emph{Chaos, Solitons \& Fractals}, 177:\penalty0 114271, 2023.

\bibitem[Cover(1999)]{cover1999elements}
Cover, T.~M.
\newblock \emph{Elements of information theory}.
\newblock John Wiley \& Sons, 1999.

\bibitem[Dempster et~al.(2020)Dempster, Petitjean, and Webb]{dempster2020rocket}
Dempster, A., Petitjean, F., and Webb, G.~I.
\newblock Rocket: exceptionally fast and accurate time series classification using random convolutional kernels.
\newblock \emph{Data Mining and Knowledge Discovery}, 34\penalty0 (5):\penalty0 1454--1495, 2020.

\bibitem[Dempster et~al.(2023)Dempster, Schmidt, and Webb]{dempster_etal_2023}
Dempster, A., Schmidt, D.~F., and Webb, G.~I.
\newblock Hydra: Competing convolutional kernels for fast and accurate time series classification.
\newblock \emph{Data Mining and Knowledge Discovery}, 37:\penalty0 1779--1805, 2023.

\bibitem[Dem{\v{s}}ar(2006)]{demvsar2006statistical}
Dem{\v{s}}ar, J.
\newblock Statistical comparisons of classifiers over multiple data sets.
\newblock \emph{The Journal of Machine learning research}, 7:\penalty0 1--30, 2006.

\bibitem[Dinh et~al.(2017{\natexlab{a}})Dinh, Pascanu, Bengio, and Bengio]{dinh2017sharp}
Dinh, L., Pascanu, R., Bengio, S., and Bengio, Y.
\newblock Sharp minima can generalize for deep nets.
\newblock In \emph{International Conference on Machine Learning}, pp.\  1019--1028. PMLR, 2017{\natexlab{a}}.

\bibitem[Dinh et~al.(2017{\natexlab{b}})Dinh, Pascanu, Bengio, and Bengio]{dinh2017sharpminimageneralizedeep}
Dinh, L., Pascanu, R., Bengio, S., and Bengio, Y.
\newblock Sharp minima can generalize for deep nets, 2017{\natexlab{b}}.
\newblock URL \url{https://arxiv.org/abs/1703.04933}.

\bibitem[Dobos \& Abonyi(2013)Dobos and Abonyi]{dobos2013Fisher}
Dobos, L. and Abonyi, J.
\newblock Fisher information matrix based time-series segmentation of process data.
\newblock \emph{Chemical Engineering Science}, 101:\penalty0 99--108, 2013.

\bibitem[Early et~al.(2024)Early, Cheung, Cutajar, Xie, Kandola, and Twomey]{anonymous2024inherently}
Early, J., Cheung, G.~K., Cutajar, K., Xie, H., Kandola, J., and Twomey, N.
\newblock Inherently interpretable time series classification via multiple instance learning.
\newblock In \emph{The Twelfth International Conference on Learning Representations}, 2024.
\newblock URL \url{https://openreview.net/forum?id=xriGRsoAza}.

\bibitem[Eldele et~al.(2021)Eldele, Ragab, Chen, Wu, Kwoh, Li, and Guan]{eldele2021time}
Eldele, E., Ragab, M., Chen, Z., Wu, M., Kwoh, C.~K., Li, X., and Guan, C.
\newblock Time-series representation learning via temporal and contextual contrasting.
\newblock \emph{arXiv preprint arXiv:2106.14112}, 2021.

\bibitem[Eldele et~al.(2024)Eldele, Ragab, Chen, Wu, and Li]{eldele2024tslanet}
Eldele, E., Ragab, M., Chen, Z., Wu, M., and Li, X.
\newblock Tslanet: Rethinking transformers for time series representation learning.
\newblock \emph{arXiv preprint arXiv:2404.08472}, 2024.

\bibitem[Ferreira et~al.(2023)Ferreira, Ravazzi, Dabbene, Calafiore, and Fiore]{ferreira2023forecasting}
Ferreira, G.~O., Ravazzi, C., Dabbene, F., Calafiore, G.~C., and Fiore, M.
\newblock Forecasting network traffic: A survey and tutorial with open-source comparative evaluation.
\newblock \emph{IEEE Access}, 11:\penalty0 6018--6044, 2023.

\bibitem[Foret et~al.(2020)Foret, Kleiner, Mobahi, and Neyshabur]{foret2020sharpness}
Foret, P., Kleiner, A., Mobahi, H., and Neyshabur, B.
\newblock Sharpness-aware minimization for efficiently improving generalization.
\newblock \emph{arXiv preprint arXiv:2010.01412}, 2020.

\bibitem[Foumani et~al.(2024)Foumani, Tan, Webb, and Salehi]{foumani2024improving}
Foumani, N.~M., Tan, C.~W., Webb, G.~I., and Salehi, M.
\newblock Improving position encoding of transformers for multivariate time series classification.
\newblock \emph{Data Mining and Knowledge Discovery}, 38\penalty0 (1):\penalty0 22--48, 2024.

\bibitem[Ilbert et~al.(2024)Ilbert, Odonnat, Feofanov, Virmaux, Paolo, Palpanas, and Redko]{ilbertsamformer}
Ilbert, R., Odonnat, A., Feofanov, V., Virmaux, A., Paolo, G., Palpanas, T., and Redko, I.
\newblock Samformer: Unlocking the potential of transformers in time series forecasting with sharpness-aware minimization and channel-wise attention.
\newblock In \emph{Forty-first International Conference on Machine Learning}, 2024.

\bibitem[Ioffe \& Szegedy(2015)Ioffe and Szegedy]{batchnormlize}
Ioffe, S. and Szegedy, C.
\newblock Batch normalization: Accelerating deep network training by reducing internal covariate shift.
\newblock In \emph{International conference on machine learning}, pp.\  448--456. pmlr, 2015.

\bibitem[Ismail~Fawaz et~al.(2020)Ismail~Fawaz, Lucas, Forestier, Pelletier, Schmidt, Weber, Webb, Idoumghar, Muller, and Petitjean]{ismail2020inceptiontime}
Ismail~Fawaz, H., Lucas, B., Forestier, G., Pelletier, C., Schmidt, D.~F., Weber, J., Webb, G.~I., Idoumghar, L., Muller, P.-A., and Petitjean, F.
\newblock Inceptiontime: Finding alexnet for time series classification.
\newblock \emph{Data Mining and Knowledge Discovery}, 34\penalty0 (6):\penalty0 1936--1962, 2020.

\bibitem[Jhunjhunwala et~al.(2024)Jhunjhunwala, Wang, and Joshi]{jhunjhunwala2024fedfisher}
Jhunjhunwala, D., Wang, S., and Joshi, G.
\newblock Fedfisher: Leveraging fisher information for one-shot federated learning.
\newblock In \emph{International Conference on Artificial Intelligence and Statistics}, pp.\  1612--1620. PMLR, 2024.

\bibitem[Jiang et~al.(2019)Jiang, Neyshabur, Mobahi, Krishnan, and Bengio]{jiang2019fantastic}
Jiang, Y., Neyshabur, B., Mobahi, H., Krishnan, D., and Bengio, S.
\newblock Fantastic generalization measures and where to find them.
\newblock \emph{arXiv preprint arXiv:1912.02178}, 2019.

\bibitem[Jin et~al.(2022)Jin, Park, Maddix, Wang, and Wang]{jin2022domain}
Jin, X., Park, Y., Maddix, D., Wang, H., and Wang, Y.
\newblock Domain adaptation for time series forecasting via attention sharing.
\newblock In \emph{International Conference on Machine Learning}, pp.\  10280--10297. PMLR, 2022.

\bibitem[Karim et~al.(2019)Karim, Majumdar, Darabi, and Harford]{karim2019multivariate}
Karim, F., Majumdar, S., Darabi, H., and Harford, S.
\newblock Multivariate lstm-fcns for time series classification.
\newblock \emph{Neural networks}, 116:\penalty0 237--245, 2019.

\bibitem[Kay(1993)]{kay1993statistical}
Kay, S.~M.
\newblock Statistical signal processing: estimation theory.
\newblock \emph{Prentice Hall}, 1:\penalty0 Chapter--3, 1993.

\bibitem[Kemp et~al.(2000)Kemp, Zwinderman, Tuk, Kamphuisen, and Oberye]{kemp2000analysis}
Kemp, B., Zwinderman, A.~H., Tuk, B., Kamphuisen, H.~A., and Oberye, J.~J.
\newblock Analysis of a sleep-dependent neuronal feedback loop: the slow-wave microcontinuity of the eeg.
\newblock \emph{IEEE Transactions on Biomedical Engineering}, 47\penalty0 (9):\penalty0 1185--1194, 2000.

\bibitem[Keskar et~al.(2016)Keskar, Mudigere, Nocedal, Smelyanskiy, and Tang]{keskar2016large}
Keskar, N.~S., Mudigere, D., Nocedal, J., Smelyanskiy, M., and Tang, P. T.~P.
\newblock On large-batch training for deep learning: Generalization gap and sharp minima.
\newblock \emph{arXiv preprint arXiv:1609.04836}, 2016.

\bibitem[Kim et~al.(2022{\natexlab{a}})Kim, Li, Hu, and Hospedales]{kim2022Fisher}
Kim, M., Li, D., Hu, S.~X., and Hospedales, T.
\newblock Fisher sam: Information geometry and sharpness aware minimisation.
\newblock In \emph{International Conference on Machine Learning}, pp.\  11148--11161. PMLR, 2022{\natexlab{a}}.

\bibitem[Kim et~al.(2022{\natexlab{b}})Kim, Kim, Tae, Park, Choi, and Choo]{kim2022reversible}
Kim, T., Kim, J., Tae, Y., Park, C., Choi, J.-H., and Choo, J.
\newblock Reversible instance normalization for accurate time-series forecasting against distribution shift.
\newblock In \emph{International Conference on Learning Representations}, 2022{\natexlab{b}}.
\newblock URL \url{https://openreview.net/forum?id=cGDAkQo1C0p}.

\bibitem[Kirkpatrick et~al.(2017)Kirkpatrick, Pascanu, Rabinowitz, Veness, Desjardins, Rusu, Milan, Quan, Ramalho, Grabska-Barwinska, et~al.]{kirkpatrick2017overcoming}
Kirkpatrick, J., Pascanu, R., Rabinowitz, N., Veness, J., Desjardins, G., Rusu, A.~A., Milan, K., Quan, J., Ramalho, T., Grabska-Barwinska, A., et~al.
\newblock Overcoming catastrophic forgetting in neural networks.
\newblock \emph{Proceedings of the national academy of sciences}, 114\penalty0 (13):\penalty0 3521--3526, 2017.

\bibitem[Lee et~al.(2017)Lee, Kim, Jun, Ha, and Zhang]{lee2017overcoming}
Lee, S.-W., Kim, J.-H., Jun, J., Ha, J.-W., and Zhang, B.-T.
\newblock Overcoming catastrophic forgetting by incremental moment matching.
\newblock \emph{Advances in neural information processing systems}, 30, 2017.

\bibitem[Li et~al.(2021)Li, Choi, Xu, Bhowmick, Chun, and Wong]{li2021shapenet}
Li, G., Choi, B., Xu, J., Bhowmick, S.~S., Chun, K.-P., and Wong, G. L.-H.
\newblock Shapenet: A shapelet-neural network approach for multivariate time series classification.
\newblock In \emph{Proceedings of the AAAI conference on artificial intelligence}, volume~35, pp.\  8375--8383, 2021.

\bibitem[Li et~al.(2024)Li, Carreon-Rascon, Chen, Yuan, and Li]{li2024mts}
Li, H., Carreon-Rascon, A.~S., Chen, X., Yuan, G., and Li, A.
\newblock Mts-lof: medical time-series representation learning via occlusion-invariant features.
\newblock \emph{IEEE Journal of Biomedical and Health Informatics}, 2024.

\bibitem[Li et~al.(2018)Li, Wang, Shi, Hou, and Liu]{LI2018109}
Li, Y., Wang, N., Shi, J., Hou, X., and Liu, J.
\newblock Adaptive batch normalization for practical domain adaptation.
\newblock \emph{Pattern Recognition}, 80:\penalty0 109--117, 2018.
\newblock ISSN 0031-3203.
\newblock \doi{https://doi.org/10.1016/j.patcog.2018.03.005}.
\newblock URL \url{https://www.sciencedirect.com/science/article/pii/S003132031830092X}.

\bibitem[Lin et~al.(2023)Lin, Wen, Cao, Huang, Bian, Lin, and Wu]{lin2023nutime}
Lin, C., Wen, X., Cao, W., Huang, C., Bian, J., Lin, S., and Wu, Z.
\newblock Nutime: Numerically multi-scaled embedding for large-scale time-series pretraining.
\newblock \emph{arXiv preprint arXiv:2310.07402}, 2023.

\bibitem[Liu et~al.(2024)Liu, Yang, Liu, Chen, Liang, Wang, Cui, and Gu]{liu2024todynet}
Liu, H., Yang, D., Liu, X., Chen, X., Liang, Z., Wang, H., Cui, Y., and Gu, J.
\newblock Todynet: temporal dynamic graph neural network for multivariate time series classification.
\newblock \emph{Information Sciences}, pp.\  120914, 2024.

\bibitem[Liu et~al.(2023)Liu, Cheng, Li, Huang, Liu, Xie, and Chen]{liu2023adaptive}
Liu, Z., Cheng, M., Li, Z., Huang, Z., Liu, Q., Xie, Y., and Chen, E.
\newblock Adaptive normalization for non-stationary time series forecasting: A temporal slice perspective.
\newblock \emph{Advances in Neural Information Processing Systems}, 36:\penalty0 14273--14292, 2023.

\bibitem[Loshchilov \& Hutter(2017)Loshchilov and Hutter]{loshchilov2017decoupled}
Loshchilov, I. and Hutter, F.
\newblock Decoupled weight decay regularization.
\newblock \emph{arXiv preprint arXiv:1711.05101}, 2017.

\bibitem[Ly et~al.(2017)Ly, Marsman, Verhagen, Grasman, and Wagenmakers]{ly2017tutorial}
Ly, A., Marsman, M., Verhagen, J., Grasman, R.~P., and Wagenmakers, E.-J.
\newblock A tutorial on fisher information.
\newblock \emph{Journal of Mathematical Psychology}, 80:\penalty0 40--55, 2017.

\bibitem[Martens \& Grosse(2015)Martens and Grosse]{martens2015optimizing}
Martens, J. and Grosse, R.
\newblock Optimizing neural networks with kronecker-factored approximate curvature.
\newblock In \emph{International conference on machine learning}, pp.\  2408--2417. PMLR, 2015.

\bibitem[Middlehurst et~al.(2021)Middlehurst, Large, Flynn, Lines, Bostrom, and Bagnall]{middlehurst2021hive}
Middlehurst, M., Large, J., Flynn, M., Lines, J., Bostrom, A., and Bagnall, A.
\newblock Hive-cote 2.0: a new meta ensemble for time series classification.
\newblock \emph{Machine Learning}, 110\penalty0 (11):\penalty0 3211--3243, 2021.

\bibitem[Miltiadous et~al.(2023{\natexlab{a}})Miltiadous, Gionanidis, Tzimourta, Giannakeas, and Tzallas]{miltiadous2023dice}
Miltiadous, A., Gionanidis, E., Tzimourta, K.~D., Giannakeas, N., and Tzallas, A.~T.
\newblock Dice-net: a novel convolution-transformer architecture for alzheimer detection in eeg signals.
\newblock \emph{IEEe Access}, 11:\penalty0 71840--71858, 2023{\natexlab{a}}.

\bibitem[Miltiadous et~al.(2023{\natexlab{b}})Miltiadous, Tzimourta, Afrantou, Ioannidis, Grigoriadis, Tsalikakis, Angelidis, Tsipouras, Glavas, Giannakeas, et~al.]{miltiadous2023dataset}
Miltiadous, A., Tzimourta, K.~D., Afrantou, T., Ioannidis, P., Grigoriadis, N., Tsalikakis, D.~G., Angelidis, P., Tsipouras, M.~G., Glavas, E., Giannakeas, N., et~al.
\newblock A dataset of scalp eeg recordings of alzheimer’s disease, frontotemporal dementia and healthy subjects from routine eeg.
\newblock \emph{Data}, 8\penalty0 (6):\penalty0 95, 2023{\natexlab{b}}.

\bibitem[Neyshabur et~al.(2017)Neyshabur, Bhojanapalli, McAllester, and Srebro]{neyshabur2017exploring}
Neyshabur, B., Bhojanapalli, S., McAllester, D., and Srebro, N.
\newblock Exploring generalization in deep learning.
\newblock \emph{Advances in neural information processing systems}, 30, 2017.

\bibitem[Nie et~al.(2023)Nie, Nguyen, Sinthong, and Kalagnanam]{nie2023a}
Nie, Y., Nguyen, N.~H., Sinthong, P., and Kalagnanam, J.
\newblock A time series is worth 64 words: Long-term forecasting with transformers.
\newblock In \emph{The Eleventh International Conference on Learning Representations}, 2023.
\newblock URL \url{https://openreview.net/forum?id=Jbdc0vTOcol}.

\bibitem[Niknazar \& Mednick(2024)Niknazar and Mednick]{niknazar2024multi}
Niknazar, H. and Mednick, S.~C.
\newblock A multi-level interpretable sleep stage scoring system by infusing experts' knowledge into a deep network architecture.
\newblock \emph{IEEE Transactions on Pattern Analysis and Machine Intelligence}, 2024.

\bibitem[Paszke et~al.(2019)Paszke, Gross, Massa, Lerer, Bradbury, Chanan, Killeen, Lin, Gimelshein, Antiga, et~al.]{paszke2019pytorch}
Paszke, A., Gross, S., Massa, F., Lerer, A., Bradbury, J., Chanan, G., Killeen, T., Lin, Z., Gimelshein, N., Antiga, L., et~al.
\newblock Pytorch: An imperative style, high-performance deep learning library.
\newblock \emph{Advances in neural information processing systems}, 32, 2019.

\bibitem[Petzka et~al.(2021)Petzka, Kamp, Adilova, Sminchisescu, and Boley]{petzka2021relative}
Petzka, H., Kamp, M., Adilova, L., Sminchisescu, C., and Boley, M.
\newblock Relative flatness and generalization.
\newblock \emph{Advances in neural information processing systems}, 34:\penalty0 18420--18432, 2021.

\bibitem[Peyr{\'e} et~al.(2019)Peyr{\'e}, Cuturi, et~al.]{peyre2019computational}
Peyr{\'e}, G., Cuturi, M., et~al.
\newblock Computational optimal transport: With applications to data science.
\newblock \emph{Foundations and Trends{\textregistered} in Machine Learning}, 11\penalty0 (5-6):\penalty0 355--607, 2019.

\bibitem[Purushotham et~al.(2017)Purushotham, Carvalho, Nilanon, and Liu]{purushotham2017variational}
Purushotham, S., Carvalho, W., Nilanon, T., and Liu, Y.
\newblock Variational recurrent adversarial deep domain adaptation.
\newblock In \emph{International conference on learning representations}, 2017.

\bibitem[Ramdas et~al.(2017)Ramdas, Garc{\'\i}a~Trillos, and Cuturi]{ramdas2017wasserstein}
Ramdas, A., Garc{\'\i}a~Trillos, N., and Cuturi, M.
\newblock On wasserstein two-sample testing and related families of nonparametric tests.
\newblock \emph{Entropy}, 19\penalty0 (2):\penalty0 47, 2017.

\bibitem[Ruiz et~al.(2021)Ruiz, Flynn, Large, Middlehurst, and Bagnall]{ruiz2021great}
Ruiz, A.~P., Flynn, M., Large, J., Middlehurst, M., and Bagnall, A.
\newblock The great multivariate time series classification bake off: a review and experimental evaluation of recent algorithmic advances.
\newblock \emph{Data Mining and Knowledge Discovery}, 35\penalty0 (2):\penalty0 401--449, 2021.

\bibitem[Seto et~al.(2015)Seto, Zhang, and Zhou]{seto2015multivariate}
Seto, S., Zhang, W., and Zhou, Y.
\newblock Multivariate time series classification using dynamic time warping template selection for human activity recognition.
\newblock In \emph{2015 IEEE symposium series on computational intelligence}, pp.\  1399--1406. IEEE, 2015.

\bibitem[Sezer et~al.(2020)Sezer, Gudelek, and Ozbayoglu]{sezer2020financial}
Sezer, O.~B., Gudelek, M.~U., and Ozbayoglu, A.~M.
\newblock Financial time series forecasting with deep learning: A systematic literature review: 2005--2019.
\newblock \emph{Applied soft computing}, 90:\penalty0 106181, 2020.

\bibitem[Telesca \& Lovallo(2017)Telesca and Lovallo]{telesca2017performance}
Telesca, L. and Lovallo, M.
\newblock On the performance of fisher information measure and shannon entropy estimators.
\newblock \emph{Physica A: Statistical Mechanics and its Applications}, 484:\penalty0 569--576, 2017.

\bibitem[Ulyanov et~al.(2016)Ulyanov, Vedaldi, and Lempitsky]{InstanceNormalization}
Ulyanov, D., Vedaldi, A., and Lempitsky, V.
\newblock Instance normalization: The missing ingredient for fast stylization.
\newblock \emph{arXiv preprint arXiv:1607.08022}, 2016.

\bibitem[Van~Dijk et~al.(2022)Van~Dijk, Van~Wingen, Denys, Olbrich, Van~Ruth, and Arns]{van2022two}
Van~Dijk, H., Van~Wingen, G., Denys, D., Olbrich, S., Van~Ruth, R., and Arns, M.
\newblock The two decades brainclinics research archive for insights in neurophysiology (tdbrain) database.
\newblock \emph{Scientific data}, 9\penalty0 (1):\penalty0 333, 2022.

\bibitem[Vrba \& Robinson(2001)Vrba and Robinson]{magnetoencephalography}
Vrba, J. and Robinson, S.~E.
\newblock Signal processing in magnetoencephalography.
\newblock \emph{Methods}, 25\penalty0 (2):\penalty0 249--271, 2001.

\bibitem[Wagner et~al.(2020)Wagner, Strodthoff, Bousseljot, Kreiseler, Lunze, Samek, and Schaeffter]{wagner2020ptb}
Wagner, P., Strodthoff, N., Bousseljot, R.-D., Kreiseler, D., Lunze, F.~I., Samek, W., and Schaeffter, T.
\newblock Ptb-xl, a large publicly available electrocardiography dataset.
\newblock \emph{Scientific data}, 7\penalty0 (1):\penalty0 1--15, 2020.

\bibitem[Wang \& Shang(2018)Wang and Shang]{wang2018analysis}
Wang, Y. and Shang, P.
\newblock Analysis of shannon-fisher information plane in time series based on information entropy.
\newblock \emph{Chaos: An Interdisciplinary Journal of Nonlinear Science}, 28\penalty0 (10), 2018.

\bibitem[Wang et~al.(2024{\natexlab{a}})Wang, Huang, Li, Yan, and Zhang]{wang2024medformer}
Wang, Y., Huang, N., Li, T., Yan, Y., and Zhang, X.
\newblock Medformer: A multi-granularity patching transformer for medical time-series classification.
\newblock \emph{arXiv preprint arXiv:2405.19363}, 2024{\natexlab{a}}.

\bibitem[Wang et~al.(2024{\natexlab{b}})Wang, Li, Yan, Song, and Zhang]{wang2024evaluate}
Wang, Y., Li, T., Yan, Y., Song, W., and Zhang, X.
\newblock How to evaluate your medical time series classification?
\newblock \emph{arXiv preprint arXiv:2410.03057}, 2024{\natexlab{b}}.

\bibitem[Wang et~al.(2024{\natexlab{c}})Wang, Wu, Dong, Liu, Long, and Wang]{wang2024deep}
Wang, Y., Wu, H., Dong, J., Liu, Y., Long, M., and Wang, J.
\newblock Deep time series models: A comprehensive survey and benchmark.
\newblock \emph{arXiv preprint arXiv:2407.13278}, 2024{\natexlab{c}}.

\bibitem[Wu et~al.(2023{\natexlab{a}})Wu, Hu, Liu, Zhou, Wang, and Long]{wu2022timesnet}
Wu, H., Hu, T., Liu, Y., Zhou, H., Wang, J., and Long, M.
\newblock Timesnet: Temporal 2d-variation modeling for general time series analysis.
\newblock In \emph{The Eleventh International Conference on Learning Representations}, 2023{\natexlab{a}}.
\newblock URL \url{https://openreview.net/forum?id=ju_Uqw384Oq}.

\bibitem[Wu et~al.(2023{\natexlab{b}})Wu, Zhou, Long, and Wang]{wu2023interpretable}
Wu, H., Zhou, H., Long, M., and Wang, J.
\newblock Interpretable weather forecasting for worldwide stations with a unified deep model.
\newblock \emph{Nature Machine Intelligence}, 5\penalty0 (6):\penalty0 602--611, 2023{\natexlab{b}}.

\bibitem[Wu \& He(2018)Wu and He]{wu2018groupnormalization}
Wu, Y. and He, K.
\newblock Group normalization, 2018.
\newblock URL \url{https://arxiv.org/abs/1803.08494}.

\bibitem[Yun \& Yang(2024)Yun and Yang]{yun2024riemannian}
Yun, J. and Yang, E.
\newblock Riemannian sam: sharpness-aware minimization on riemannian manifolds.
\newblock \emph{Advances in Neural Information Processing Systems}, 36, 2024.

\bibitem[Zerveas et~al.(2021)Zerveas, Jayaraman, Patel, Bhamidipaty, and Eickhoff]{zerveas2021transformer}
Zerveas, G., Jayaraman, S., Patel, D., Bhamidipaty, A., and Eickhoff, C.
\newblock A transformer-based framework for multivariate time series representation learning.
\newblock In \emph{Proceedings of the 27th ACM SIGKDD conference on knowledge discovery \& data mining}, pp.\  2114--2124, 2021.

\bibitem[Zhang et~al.(2020)Zhang, Gao, Lin, and Lu]{zhang2020tapnet}
Zhang, X., Gao, Y., Lin, J., and Lu, C.-T.
\newblock Tapnet: Multivariate time series classification with attentional prototypical network.
\newblock In \emph{Proceedings of the AAAI Conference on Artificial Intelligence}, volume~34, pp.\  6845--6852, 2020.

\bibitem[Zhang \& Yan(2023)Zhang and Yan]{zhang2023crossformer}
Zhang, Y. and Yan, J.
\newblock Crossformer: Transformer utilizing cross-dimension dependency for multivariate time series forecasting.
\newblock In \emph{The Eleventh International Conference on Learning Representations}, 2023.
\newblock URL \url{https://openreview.net/forum?id=vSVLM2j9eie}.

\bibitem[Zhang \& Xu(2024)Zhang and Xu]{zhang2024implicit}
Zhang, Z. and Xu, Z.-Q.~J.
\newblock Implicit regularization of dropout.
\newblock \emph{IEEE Transactions on Pattern Analysis and Machine Intelligence}, 2024.

\bibitem[Zhou et~al.(2021)Zhou, Yang, Qiao, and Xiang]{Zhou_2021}
Zhou, K., Yang, Y., Qiao, Y., and Xiang, T.
\newblock Domain adaptive ensemble learning.
\newblock \emph{IEEE Transactions on Image Processing}, 30:\penalty0 8008–8018, 2021.
\newblock ISSN 1941-0042.
\newblock \doi{10.1109/tip.2021.3112012}.
\newblock URL \url{http://dx.doi.org/10.1109/TIP.2021.3112012}.

\bibitem[Zhou et~al.(2023)Zhou, Niu, Sun, Jin, et~al.]{zhou2023one}
Zhou, T., Niu, P., Sun, L., Jin, R., et~al.
\newblock One fits all: Power general time series analysis by pretrained lm.
\newblock \emph{Advances in neural information processing systems}, 36:\penalty0 43322--43355, 2023.

\end{thebibliography}
\bibliographystyle{icml2025}

\newpage
\appendix
\onecolumn

\renewcommand{\thefigure}{S\arabic{figure}}
\renewcommand{\thetable}{S\arabic{table}}


\section{Proofs}\label{appendix:proof}

\setcounter{lemma}{0}
\begin{lemma}\label{lm:2}
At a local minimum, the expected Hessian matrix of the negative log-likelihood is asymptotically equivalent to the Fisher Information Matrix, w.r.t $\Theta$, which is presented as,
 \begin{align}
     {\mathbb{E}_{p(\mathcal{D} \mid \Theta)}}\left[\nabla^2{(-\log p(\mathcal{D} \mid \Theta))}\right] =  \mathcal{F}.
 \end{align}
\end{lemma}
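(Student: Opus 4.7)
The plan is to derive this as a standard Fisher information identity that follows from two successive applications of differentiation under the integral sign to the normalization condition $\int p(\mathcal{D}\mid\Theta)\, d\mathcal{D}=1$. I would state upfront the regularity assumptions required (smoothness of $p$ in $\Theta$, support independent of $\Theta$, and the interchange of $\nabla_\Theta$ with the integral), which are the implicit conditions under which the identity $\mathbb{E}[\nabla^2(-\log p)] = \mathcal{F}$ holds exactly, not merely asymptotically.

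First, I would differentiate the identity $\int p(\mathcal{D}\mid\Theta)\,d\mathcal{D}=1$ once with respect to $\Theta$ and use the log-derivative trick $\nabla_\Theta p = p\,\nabla_\Theta \log p$ to obtain the well-known ``score has zero mean'' identity
\begin{equation*}
\mathbb{E}_{p(\mathcal{D}\mid\Theta)}\bigl[\nabla_\Theta \log p(\mathcal{D}\mid\Theta)\bigr] = 0.
\end{equation*}
This is the cleanest starting point because it converts a statement about $p$ into a statement about the score function, which is the object appearing in Definition 2.

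Second, I would differentiate this vector identity one more time with respect to $\Theta$. Applying the product rule inside the integral gives
\begin{equation*}
\int \bigl[\nabla_\Theta p\,(\nabla_\Theta \log p)^\top + p\,\nabla_\Theta^2 \log p\bigr]\,d\mathcal{D} = 0,
\end{equation*}
and substituting $\nabla_\Theta p = p\,\nabla_\Theta \log p$ once more yields
\begin{equation*}
\mathbb{E}\bigl[(\nabla_\Theta \log p)(\nabla_\Theta \log p)^\top\bigr] + \mathbb{E}\bigl[\nabla_\Theta^2 \log p\bigr] = 0.
\end{equation*}
Rearranging and recognizing the first term as $\mathcal{F}(\Theta)$ from Definition 2 gives exactly $\mathbb{E}[\nabla^2(-\log p)] = \mathcal{F}$, completing the proof.

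The main subtlety, and thus what I would flag in the remark around the statement, is reconciling the paper's qualification ``at a local minimum'' and ``asymptotically equivalent'' with the fact that the identity above is actually exact under the stated regularity. The correct interpretation is that in practice one works with the empirical negative log-likelihood $\mathcal{L}$ and the empirical FIM from Eq.~(4), whereas the lemma concerns expectations under the true model $p(\mathcal{D}\mid\Theta)$; so the ``asymptotic'' language refers to the convergence of empirical Hessian and empirical FIM to their expectations as the sample size grows, and ``at a local minimum'' is where the score-mean identity is empirically best satisfied. I would note this in one short closing remark so the reader does not expect a separate concentration argument; the core algebraic identity is exact and is the content we need to feed into Corollary 1.
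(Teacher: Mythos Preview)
Your proof is correct and at its core uses the same mechanism as the paper: both establish the second Bartlett identity by differentiating the normalization $\int p(\mathcal{D}\mid\Theta)\,d\mathcal{D}=1$ under the integral sign. The paper expands $\nabla^2\log p$ directly via the quotient rule and then takes the expectation, whereas you first pass through the score-mean identity and differentiate again; these are the same computation in a different order.

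One substantive difference is worth flagging. The paper's proof inserts the line ``at a local minimum, $\nabla p(\mathcal{D}\mid\Theta)=0$'' before taking the expectation, but that hypothesis is never actually used: the term $\int(\nabla^2 p/p)\,p\,d\mathcal{D}$ vanishes because $\nabla^2\!\int p=0$, not because of any stationarity assumption, and the outer-product term is kept and identified with $\mathcal{F}$. Your derivation avoids this detour and, more importantly, your closing remark correctly pinpoints that the identity is \emph{exact} under the stated regularity conditions, with the ``asymptotic'' and ``local minimum'' qualifiers in the lemma referring only to the empirical quantities used downstream. That reading is cleaner than the paper's own presentation and is the right thing to carry into Corollary~1.
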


\begin{proof}\label{proof2}
We first obtain,
\begin{align} \label{eq:xx0}
    &{\mathbb{E}_{p(\mathcal{D} \mid \Theta)}}\left[\nabla^2{(-\log p(\mathcal{D} \mid \Theta))}\right]\\ \nonumber
    = - &{\mathbb{E}_{p(\mathcal{D} \mid \Theta)}}\left[\nabla^2{(\log p(\mathcal{D} \mid \Theta))}\right]. 
\end{align}
Now, we focus on the term $\nabla^2 {\log p(\mathcal{D} \mid \Theta)} $,
\begin{align}
&\nabla^2 {\log p(\mathcal{D} \mid \Theta)}  \\ \nonumber
=&\nabla \left(\nabla {\log p(\mathcal{D} \mid \Theta)}\right)
=\nabla\left(\frac{\nabla p(\mathcal{D} \mid \Theta)}{p(\mathcal{D} \mid \Theta)}\right) \\ \nonumber
=&\frac{\nabla^2{p(\mathcal{D} \mid \Theta)} p(\mathcal{D} \mid \Theta)-\nabla p(\mathcal{D} \mid \Theta) \nabla p(\mathcal{D} \mid \Theta)^{\mathrm{T}}}{p(\mathcal{D} \mid \Theta) p(\mathcal{D} \mid \Theta)} \\ \nonumber
=&\frac{\nabla^2{p(\mathcal{D} \mid \Theta)} p(\mathcal{D} \mid \Theta)}{p(\mathcal{D} \mid \Theta) p(\mathcal{D} \mid \Theta)}-\frac{\nabla p(\mathcal{D} \mid \Theta) \nabla p(\mathcal{D} \mid \Theta)^{\mathrm{T}}}{p(\mathcal{D} \mid \Theta) p(\mathcal{D} \mid \Theta)} \\ \nonumber
=&\frac{\nabla^2{p(\mathcal{D} \mid \Theta)}}{p(\mathcal{D} \mid \Theta)}-\left(\frac{\nabla p(\mathcal{D} \mid \Theta)}{p(\mathcal{D} \mid \Theta)}\right)\left(\frac{\nabla p(\mathcal{D} \mid \Theta)}{p(\mathcal{D} \mid \Theta)}\right)^{\mathrm{T}}.
\end{align}

At a local minimum, $\nabla p(\mathcal{D} \mid \Theta)=0$. Therefore,
\begin{align}\label{eq:xxx}
  &  {\mathbb{E}_{p(\mathcal{D} \mid \Theta)}}\left[\nabla^2{\log p(\mathcal{D} \mid \Theta)}\right] \\ \nonumber
=&\int \frac{\nabla^2{p(\mathcal{D} \mid \Theta)}}{p(\mathcal{D} \mid \Theta)} p(\mathcal{D} \mid \Theta) \mathrm{d} x\\ \nonumber
-&{\mathbb{E}_{p(\mathcal{D} \mid \Theta)}}\left[\nabla \log p(\mathcal{D} \mid \Theta) \nabla \log p(\mathcal{D} \mid \Theta)^{\mathrm{T}}\right] \\ \nonumber
=&\nabla^2{\int p(\mathcal{D} \mid \Theta) \mathrm{d} x}-\mathcal{F} \\ \nonumber
 =& -\mathcal{F}.
\end{align}
Substitute Eq. \ref{eq:xxx} to Eq. \ref{eq:xx0}, the proof is completed.
\end{proof}

\setcounter{corollary}{0}
\begin{corollary}
At a local minimum, the upper bound of $\alpha\text{-sharpness} $ is able to be approximated via Taylor expansion as 
\begin{align}\label{eq:sharp_appendix}
    \alpha\text{-sharpness}\propto \frac{\alpha^2 \|\mathcal{F}\|_1 }{2(1+\mathcal{L}(\Theta))}.
\end{align}
\end{corollary}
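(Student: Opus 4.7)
The plan is to unfold the definition of $\alpha$-sharpness and to apply a second-order Taylor expansion of $\mathcal{L}$ around the local minimum $\Theta$, then to exchange the Hessian for $\mathcal{F}$ via Lemma~\ref{lm:1}. Concretely, for any $\Theta' \in \mathcal{B}_2(\alpha,\Theta)$, writing $v = \Theta'-\Theta$ with $\|v\|_2 \le \alpha$, the expansion gives
\begin{align*}
\mathcal{L}(\Theta') = \mathcal{L}(\Theta) + \nabla\mathcal{L}(\Theta)^\top v + \tfrac{1}{2} v^\top H(\Theta) v + O(\|v\|^3),
\end{align*}
where $H(\Theta)$ is the Hessian of $\mathcal{L}$. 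Since $\Theta$ is a local minimum, the first-order term vanishes, so the numerator of the sharpness expression reduces to $\tfrac{1}{2}\max_{\|v\|_2\le\alpha} v^\top H(\Theta) v$ up to higher-order corrections that are negligible in the $\alpha\to 0$ regime.

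Next I would take expectations (or equivalently evaluate on the empirical distribution used to define $\mathcal{L}$) and invoke Lemma~\ref{lm:1} to identify the expected Hessian of the negative log-likelihood with the Fisher information matrix $\mathcal{F}$. Because the paper consistently works under the diagonal approximation $\mathcal{F}\approx\mathrm{diag}(\mathcal{F})$ introduced in Eq.~\eqref{eq:calf}, the quadratic form reduces to $v^\top \mathrm{diag}(\mathcal{F}) v = \sum_i \mathcal{F}_{ii} v_i^2$, which is a sum of nonnegative terms since diagonal Fisher entries are nonnegative.

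The last step, which I expect to be the main (and really only) technical point, is bounding this quadratic form by $\alpha^2 \|\mathcal{F}\|_1$ on the Euclidean ball. Using $v_i^2 \le \|v\|_2^2 \le \alpha^2$ componentwise gives
\begin{align*}
\sum_i \mathcal{F}_{ii}\, v_i^2 \;\le\; \alpha^2 \sum_i \mathcal{F}_{ii} \;=\; \alpha^2 \|\mathrm{diag}(\mathcal{F})\|_1 \;=\; \alpha^2 \|\mathcal{F}\|_1,
\end{align*}
where the last equality uses the observation immediately after Eq.~\eqref{eq:upnorm} that under the diagonal approximation the entrywise 1-norm of $\mathcal{F}$ coincides with the 1-norm of its diagonal vector.

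Combining these pieces, the numerator of $\alpha$-sharpness is upper-bounded by $\tfrac{1}{2}\alpha^2 \|\mathcal{F}\|_1$, and dividing by $1+\mathcal{L}(\Theta)$ yields the claimed expression (up to the $O(\alpha^3)$ Taylor remainder, which is absorbed by the $\propto$ relation). The delicate step is justifying dropping the higher-order Taylor terms and the off-diagonal Hessian contributions; I would handle this by appealing explicitly to the diagonal approximation already adopted in the method section and to the smallness of $\alpha$, consistent with how $\alpha$-sharpness is interpreted in Keskar et al.
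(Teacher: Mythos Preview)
Your argument is correct and reaches the same bound as the paper, but the key bounding step differs. The paper first computes the exact maximum of the quadratic form over the Euclidean ball as the spectral norm, $\max_{\|v\|_2\le\alpha} v^\top \nabla^2\mathcal{L}(\Theta)\,v = \alpha^2\lambda_{\max}$, and then uses positive semidefiniteness of the Hessian at a local minimum to bound $\lambda_{\max}\le\sum_i\lambda_i=\mathrm{tr}(\nabla^2\mathcal{L}(\Theta))$, only invoking Lemma~\ref{lm:1} and the diagonal approximation at the very end to identify the trace with $\|\mathcal{F}\|_1$. You instead commit to the diagonal approximation up front, reduce to $\sum_i \mathcal{F}_{ii}v_i^2$, and apply the elementary componentwise inequality $v_i^2\le\|v\|_2^2\le\alpha^2$. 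Your route avoids any eigenvalue reasoning and is arguably more direct; the paper's route has the advantage of giving the sharper intermediate expression $\alpha^2\lambda_{\max}$ before relaxing, and of keeping the full Hessian in play until the final step so that the diagonal approximation is used only once. Either way the final bound is the same.
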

\begin{proof}
    We first applying the Taylor expansion for $\mathcal{L}(\Theta')$, 
\begin{align}
\mathcal{L}(\Theta') &\approx \mathcal{L}(\Theta) +\nabla \mathcal{L}(\Theta)^\top (\Theta' - \Theta) + \\ \nonumber &
+\frac{1}{2} (\Theta' - \Theta)^\top \nabla^2\mathcal{L}(\Theta) (\Theta' - \Theta) 
 +\mathcal{O}(\Theta' - \Theta).
\end{align}
When $\Theta$ is a local minimum, $\nabla \mathcal{L}(\Theta)=0$, accordingly,
\begin{align}
\mathcal{L}(\Theta')-\mathcal{L}(\Theta)&=\frac{1}{2} (\Theta' - \Theta)^\top \nabla^2\mathcal{L}(\Theta) (\Theta' - \Theta).
\end{align}

In this quadratic form, we note that:
\begin{align}\label{eq:app}
    &\frac{\max _{\Theta^{\prime} \in B_2(\alpha, \Theta)}\left(L\left(\Theta^{\prime}\right)-\mathcal{L}(\Theta)\right)}{1+\mathcal{L}(\Theta)} 
    = \frac{\alpha^2 \| \nabla^2\mathcal{L}(\Theta)\|_2 }{2(1+\mathcal{L}(\Theta))},
\end{align}
where $\| \nabla^2\mathcal{L}(\Theta)\|_2$ denotes the spectral norm of the matrix, which is equal to its largest eigenvalue $\lambda_{max}$.

We also note, at a local minimum, $\nabla^2\mathcal{L}(\Theta)$ is positive semidefinite, hence,
\begin{align}
    \| \texttt{diag}(\nabla^2\mathcal{L}(\Theta))\|_1 = \sum_i \lambda_i\geq \lambda_{max},
\end{align}
therefore,
\begin{align}  
    &\frac{\max _{\Theta^{\prime} \in B_2(\alpha, \Theta)}\left(L\left(\Theta^{\prime}\right)-\mathcal{L}(\Theta)\right)}{1+\mathcal{L}(\Theta)} 
    = \frac{\alpha^2 \| \nabla^2\mathcal{L}(\Theta)\|_2 }{2(1+\mathcal{L}(\Theta))} \leq \frac{\alpha^2 \| \nabla^2\mathcal{L}(\Theta)\|_1 }{2(1+\mathcal{L}(\Theta))}.
\end{align}

Under the diagonal approximation of FIM, after applying Lemma \ref{lm:1} above, the proof is completed.
\end{proof}

\setcounter{theorem}{0}
\begin{theorem}
Consider a \( L \)-Lipschitz objective function \( \mathcal{L}(\Theta) \), defined such that for any two points \( \Theta_1 \) and \( \Theta_2 \), the inequality \( \|\nabla \mathcal{L}(\Theta_1) - \nabla \mathcal{L}(\Theta_2)\| \leq L \|\Theta_1 - \Theta_2\| \) holds. By appropriately choosing the learning rate \( \eta \) and the FIC constraint \( \epsilon \), the convergence rate of gradient descent can be expressed as \( \mathcal{O}(\frac{1}{T}) \), where \( T \) represents the number of iterations.

\end{theorem}

\begin{proof}

We first reiterate the update rule at iteration $t$ via gradient descent,
    \begin{align}\label{eq:update}
        \Theta_{t+1} = \Theta_t - \eta \nabla \mathcal{L}(\Theta_t)',
    \end{align}
 where $\eta $ denotes the learning rate, and $\nabla \mathcal{L}(\Theta_t)'$ denotes the gradient constrained by FIC.
The Lipschitz continuity assumption can bound the change in the loss function:
\begin{align}
\mathcal{L}(\Theta_{t+1}) \leq \mathcal{L}(\Theta_t) + \nabla \mathcal{L}(\Theta_t)^\top (\Theta_{t+1} - \Theta_t) + \frac{L}{2} \|\Theta_{t+1} - \Theta_t\|^2.
\end{align}
Substitute Eq. \ref{eq:update} to above equation, and after simplifying and rearranging, 
\begin{align}
\mathcal{L}(\Theta_{t+1})- \mathcal{L}(\Theta_t) \leq-\eta \nabla \mathcal{L}(\Theta_t)^\top \nabla \mathcal{L}(\Theta_t)'+ \frac{L\epsilon \eta^2}{2}.
\end{align}

Since $\nabla \mathcal{L}(\Theta_t)^\top \nabla \mathcal{L}(\Theta_t)'\geq 0$ always holds, when selecting a suitable $\epsilon$ and $\eta$, we can ensure the loss is \textbf{sufficiently decreasing}.
Subsequently, when the model at $T$th iteration from a initial point $\Theta_{1}$ ,
\begin{align}\label{eq:someeq}
    \mathcal{L}(\Theta_{T})-\mathcal{L}(\Theta_{1})\leq\sum_{t=1}^T\left(-\eta \nabla \mathcal{L}(\Theta_t)^\top \nabla \mathcal{L}(\Theta_t)'+ \frac{L\epsilon \eta^2}{2}\right).
\end{align}
Suppose a stationary point\footnote{At this stage, for simplicity, we assume it is a local minimum.} exists during these $T$ iterations, which is denoted as $\Theta^*$. 
The following inequalities hold,
\begin{align}
    \mathcal{L}(\Theta^*)-\mathcal{L}(\Theta_{1})&\leq \mathcal{L}(\Theta_{T})-\mathcal{L}(\Theta_{1}), \label{eq:stationary1}
\end{align}
and
\begin{align}
     \|\nabla \mathcal{L}(\Theta^*)\|^2 &=\min_{t\in [T]} \|\nabla \mathcal{L}(\Theta_t)\|^2, \label{eq:stationary2}
\end{align}
where $[T]=\{1,...,T\}$. Substitute Eq. \ref{eq:stationary1} to Eq. \ref{eq:someeq} and rearrange,
\begin{align} \label{eq:rearranging}
    \frac{1}{T} \sum_{t=1}^T \nabla \mathcal{L}(\Theta_t)^\top \nabla \mathcal{L}(\Theta_t)'\leq \frac{\mathcal{L}(\Theta_{1})-\mathcal{L}(\Theta^*)}{\eta T} + \frac{L{\epsilon} \eta}{2}.
\end{align}
We also have the below inequality, 
\begin{align}\label{eq:inequalityxxx}
     \min_{t\in [T]} \nabla \mathcal{L}(\Theta_t)^\top \nabla \mathcal{L}(\Theta_t)' \leq  \frac{1}{T} \sum_{t=1}^T \nabla \mathcal{L}(\Theta_t)^\top \nabla \mathcal{L}(\Theta_t)'.
\end{align}
We can select a suitable $\epsilon$ (i.e. not too small) such that
\begin{align} \label{eq:assm}
\min_{t\in [T]} \|\nabla \mathcal{L}(\Theta_t)\|^2 = \min_{t\in [T]} \nabla \mathcal{L}(\Theta_t)^\top \nabla \mathcal{L}(\Theta_t)' .
\end{align}

Consequently, according to Eqs. \ref{eq:stationary2}, \ref{eq:rearranging}, \ref{eq:inequalityxxx}, and \ref{eq:assm}, we can immediately obtain a convergence rate of $O(1/T)$, which demonstrates a sublinear rate of convergence.
\end{proof}

\section{Algorithm Detail}\label{appendix:algorithm}
Please refer to Algorithm \ref{alg:1}.

\begin{algorithm}[!h]
\caption{FIC-TS (Training Phase)}
\begin{algorithmic}[1]
\REQUIRE Initial Neural network $\Theta$, Optimizer, number of iterations $T$, loss function $\mathcal{L}$, dataset $\mathcal{S}$
\ENSURE Trained network $\Theta$
\FOR{$t = 1$ to $T$}
    \STATE Sample a mini-batch of data $(\mathcal{B}_{X}, \mathcal{B}_{Y}) \in \mathcal{S}$
    \STATE Forward pass $\hat{\mathcal{B}_{Y}}_i = f(\mathcal{B}_{X}; \theta)$
    \STATE Compute loss $\mathcal{L}(\hat{\mathcal{B}_{Y}}, \mathcal{B}_{Y})$
    \STATE Backward pass $\nabla_\theta \mathcal{L}(\hat{\mathcal{B}_{Y}}, \mathcal{B}_{Y})$
    \STATE Compute $\|\mathcal{F}\|$
    \IF{$\|\mathcal{F}\|\geq \epsilon$}
    \STATE $\nabla \mathcal{L}(\Theta)  \leftarrow \sqrt{\frac{\epsilon}{\|\mathcal{F}\|}} \nabla\mathcal{L}(\Theta)$
    \ENDIF
    \STATE Update parameters \texttt{Optimizer.step()}
\ENDFOR
\end{algorithmic}\label{alg:1}
\end{algorithm}

\section{Dataset Description}\label{appendix:dataset}

\noindent\textbf{UEA 30 Datasets.}
The detail of all 30 datasets is provided in Table \ref{tab:datasets}. It should be noteworthy that the datasets \texttt{JapaneseVowels} and  \texttt{SpokenArabicDigits} used in the Group 2 Experiment originally have varied lengths of sequences. We pre-process data following \cite{wu2022timesnet}, where we pad them to 29 and 93, respectively.

\begin{table}[htbp]
\centering
\caption{Dataset Summary}
\label{tab:datasets}
\resizebox{0.6\linewidth}{!}{%
\begin{tabular}{l|ccccc}
\toprule
\textbf{Dataset} & \textbf{Training Size} & \textbf{Test Size} & \textbf{Dimensions} & \textbf{Length} & \textbf{Classes} \\
\midrule
ArticularyWordRecognition & 275 & 300 & 9 & 144 & 25 \\
AtrialFibrillation & 15 & 15 & 2 & 640 & 3 \\
BasicMotions & 40 & 40 & 6 & 100 & 4 \\
CharacterTrajectories & 1422 & 1436 & 3 & 182 & 20 \\
Cricket & 108 & 72 & 6 & 1197 & 12 \\
DuckDuckGeese & 60 & 40 & 1345 & 270 & 5 \\
EigenWorms & 128 & 131 & 6 & 17984 & 5 \\
Epilepsy & 137 & 138 & 3 & 206 & 4 \\
EthanolConcentration & 261 & 263 & 3 & 1751 & 4 \\
ERing & 30 & 30 & 4 & 65 & 6 \\
FaceDetection & 5890 & 3524 & 144 & 62 & 2 \\
FingerMovements & 316 & 100 & 28 & 50 & 2 \\
HandMovementDirection & 320 & 147 & 10 & 400 & 4 \\
Handwriting & 150 & 850 & 3 & 152 & 26 \\
Heartbeat & 204 & 205 & 61 & 405 & 2 \\
JapaneseVowels & 270 & 370 & 12 & 29 (max) & 9 \\
Libras & 180 & 180 & 2 & 45 & 15 \\
LSST & 2459 & 2466 & 6 & 36 & 14 \\
InsectWingbeat & 30000 & 20000 & 200 & 78 & 10 \\
MotorImagery & 278 & 100 & 64 & 3000 & 2 \\
NATOPS & 180 & 180 & 24 & 51 & 6 \\
PenDigits & 7494 & 3498 & 2 & 8 & 10 \\
PEMS-SF & 267 & 173 & 963 & 144 & 7 \\
Phoneme & 3315 & 3353 & 11 & 217 & 39 \\
RacketSports & 151 & 152 & 6 & 30 & 4 \\
SelfRegulationSCP1 & 268 & 293 & 6 & 896 & 2 \\
SelfRegulationSCP2 & 200 & 180 & 7 & 1152 & 2 \\
SpokenArabicDigits & 6599 & 2199 & 13 & 93 (max)& 10 \\
StandWalkJump & 12 & 15 & 4 & 2500 & 3 \\
UWaveGestureLibrary & 120 & 320 & 3 & 315 & 8 \\
\bottomrule
\end{tabular}}
\end{table}

\noindent\textbf{UCR dataset.}
We present the full list of UCR datasets in Fig. \ref{fig:fullucr}.

\begin{figure*}[!t]
\begin{framed}
    \texttt{Adiac}, \texttt{ArrowHead}, \texttt{Beef},
\texttt{BeetleFly}, \texttt{BirdChicken}, \texttt{Car}, \texttt{CBF}, \texttt{ChlorineConcentration}, \texttt{CinCECGTorso}, \texttt{Coffee}, \texttt{Computers}, \texttt{CricketX}, \texttt{CricketY}, \texttt{CricketZ}, \texttt{DiatomSizeReduction}, \texttt{DistalPhalanxOutlineAgeGroup}, \texttt{DistalPhalanxOutlineCorrect}, \texttt{DistalPhalanxTW}, \texttt{Earthquakes}, \texttt{ECG200}, \texttt{ECG5000}, \texttt{ECGFiveDays}, \texttt{ElectricDevices}, \texttt{FaceAll}, \texttt{FaceFour}, \texttt{FacesUCR}, \texttt{FiftyWords}, \texttt{Fish}, \texttt{FordA}, \texttt{FordB}, \texttt{GunPoint}, \texttt{Ham}, \texttt{HandOutlines}, \texttt{Haptics}, \texttt{Herring}, \texttt{InlineSkate}, \texttt{InsectWingbeatSound}, \texttt{ItalyPowerDemand}, \texttt{LargeKitchenAppliances}, \texttt{Lightning2}, \texttt{Lightning7}, \texttt{Mallat}, \texttt{Meat}, \texttt{MedicalImages}, \texttt{MiddlePhalanxOutlineAgeGroup}, \texttt{MiddlePhalanxOutlineCorrect}, \texttt{MiddlePhalanxTW}, \texttt{MoteStrain}, \texttt{NonInvasiveFetalECGThorax1}, \texttt{NonInvasiveFetalECGThorax2}, \texttt{OliveOil}, \texttt{OSULeaf}, \texttt{PhalangesOutlinesCorrect}, \texttt{Phoneme}, \texttt{Plane}, \texttt{ProximalPhalanxOutlineAgeGroup}, \texttt{ProximalPhalanxOutlineCorrect}, \texttt{ProximalPhalanxTW}, \texttt{RefrigerationDevices}, \texttt{ScreenType}, \texttt{ShapeletSim}, \texttt{ShapesAll}, \texttt{SmallKitchenAppliances}, \texttt{SonyAIBORobotSurface1}, \texttt{SonyAIBORobotSurface2}, \texttt{StarLightCurves}, \texttt{Strawberry}, \texttt{SwedishLeaf}, \texttt{Symbols}, \texttt{SyntheticControl}, \texttt{ToeSegmentation1}, \texttt{ToeSegmentation2}, \texttt{Trace}, \texttt{TwoLeadECG}, \texttt{TwoPatterns}, \texttt{UWaveGestureLibraryAll}, \texttt{UWaveGestureLibraryX}, \texttt{UWaveGestureLibraryY}, \texttt{UWaveGestureLibraryZ}, \texttt{Wafer}, \texttt{Wine}, \texttt{WordSynonyms}, \texttt{Worms}, \texttt{WormsTwoClass}, \texttt{Yoga}.
\end{framed}
\vspace{-0.3cm}
\caption{Full list of UCR 85 datasets.}
\label{fig:fullucr}
    \end{figure*}

\section{Implementation Detail}\label{appendix:Implementation}
We use AdamW optimizer \cite{loshchilov2017decoupled} with a learning rate of 5e-3 and a weight decay of 1e-4. We implemented all experiments on a cluster node with NVIDIA A100 (40 GB). We use Pytorch Library \cite{paszke2019pytorch} with version of 1.13. we implement our algorithm on a network based on \cite{ismail2020inceptiontime}. The architecture of the network can be simply presented as:

\begin{align}
    \boldsymbol{X}\in\mathbb{R}^{d\times T} \xrightarrow{f_{\texttt{feat}}}  \boldsymbol{X}\in\mathbb{R}^{128\times T} \xrightarrow{mean~pool.} \boldsymbol{X}\in\mathbb{R}^{128}  \xrightarrow{f_{\texttt{MLP}}} \hat{\boldsymbol{Y}}\in\mathbb{R}^{C},
\end{align}
where $f_{\texttt{feat}}$ denotes the backbone feature extractor, following the specifications detailed by \cite{ismail2020inceptiontime}. The MLP-based classifier, denoted as $f_{\texttt{MLP}}$, comprises two sequential layers: the first layer features $128 \times 128$ neurons with ReLu activation function, and the second layer, designed to output class probabilities, includes $128 \times C$ neurons, where $C$ represents the number of classes.

\section{Additional Statistical Test}
As suggested by \cite{demvsar2006statistical}, we can conduct the Wilcoxon signed-rank test to compare the performance of two classifiers across different datasets.
\subsection{Statistical Test on Effectiveness of RevIN}\label{appendix:stats1}
Please refer to Table \ref{tab:sts_RevIN}, which indicates that there is no significant difference in classification performance between using \texttt{RevIN} and not using \texttt{RevIN}. This suggests that \texttt{RevIN} is not helpful for classification. 

\begin{table}[]
\centering
\caption{A summary of the Wilcoxon signed-rank test on Effectiveness of RevIN.}
\label{tab:sts_RevIN}
\resizebox{0.3\linewidth}{!}{%
\begin{tabular}{c|c}\toprule
Comparison  & p-Value \\ \midrule
\textbf{w/} RevIN vs \textbf{w/o} RevIN & 0.889 \\ \bottomrule
\end{tabular}%
}
\end{table}

\subsection{Statistical Test on Ablation Analysis}\label{appendix:stats2}

Please refer to Table \ref{tab:sts_aba}.
Due to the p-value being tiny and much smaller than 0.05, we have the confidence to conclude that our method is statistically superior to the baseline.

\begin{table}[]
\centering
\caption{A summary of the Wilcoxon signed-rank test on Ablation Analysis.}
\label{tab:sts_aba}
\resizebox{0.3\linewidth}{!}{%
\begin{tabular}{c|c}\toprule
Comparison  & p-Value \\ \midrule
Itime vs Itime+FIC & 0.001 \\ 
\bottomrule
\end{tabular}%
}
\end{table}

\subsection{Statistical Test on SAM and FIC}\label{appendix:stats3}
Please refer to Table \ref{tab:sts_aba2}.
\begin{table}[]
\centering
\caption{A summary of the Wilcoxon signed-rank test on the comparison between SAM and FIC.}
\label{tab:sts_aba2}
\resizebox{0.3\linewidth}{!}{%
\begin{tabular}{c|c}\toprule
Comparison  & p-Value \\ \midrule
Accuracy & 0.006 \\ 
Runtime  & 0.001 \\ 
\bottomrule
\end{tabular}%
}
\end{table}

\section{Full Results}\label{appendix:Full Results}

\subsection{Multivariate Time Series Classification}
Please refer to Table \ref{tab:mtsc-full1} below.
\begin{table*}[htbp]
\centering
\caption{The full results on 30 UEA datasets. We reported multiple metrics, including Accuracy, balanced Accuracy, F1, Precision (P), and Recall (R).}
\label{tab:mtsc-full1}
\resizebox{\textwidth}{!}{%
\begin{tabular}{lccccc|ccccc}\toprule
\multicolumn{1}{c}{\multirow{2}{*}{\textbf{Dataset}}} & \multicolumn{5}{c}{\textbf{Uni.}} & \multicolumn{5}{c}{\textbf{Full}} \\
\multicolumn{1}{c}{} & \textbf{Accuracy} & \textbf{Bal. Accuracy} & \textbf{F1 marco} & \textbf{P marco} & \textbf{R marco} & \textbf{Accuracy} & \textbf{Bal. Accuracy} & \textbf{F1 marco} & \textbf{P marco} & \textbf{R marco} \\ \midrule
ArticularyWordRecognition & 0.993 & 0.993 & 0.993 & 0.994 & 0.993 & 0.998 & 0.998 & 0.998 & 0.998 & 0.998 \\
AtrialFibrillation & 0.567 & 0.567 & 0.505 & 0.486 & 0.567 & 0.600 & 0.600 & 0.522 & 0.477 & 0.600 \\
BasicMotions & 1.000 & 1.000 & 1.000 & 1.000 & 1.000 & 1.000 & 1.000 & 1.000 & 1.000 & 1.000 \\
CharacterTrajectories & 0.997 & 0.997 & 0.997 & 0.997 & 0.997 & 0.999 & 0.998 & 0.998 & 0.999 & 0.998 \\
Cricket & 1.000 & 1.000 & 1.000 & 1.000 & 1.000 & 1.000 & 1.000 & 1.000 & 1.000 & 1.000 \\
DuckDuckGeese & 0.650 & 0.650 & 0.637 & 0.694 & 0.650 & 0.720 & 0.720 & 0.718 & 0.769 & 0.720 \\
EigenWorms & 0.855 & 0.797 & 0.804 & 0.850 & 0.797 & 0.924 & 0.896 & 0.893 & 0.892 & 0.896 \\
ERing & 0.919 & 0.919 & 0.919 & 0.924 & 0.919 & 0.954 & 0.954 & 0.954 & 0.955 & 0.954 \\
Epilepsy & 0.989 & 0.990 & 0.989 & 0.989 & 0.990 & 1.000 & 1.000 & 1.000 & 1.000 & 1.000 \\
EthanolConcentration & 0.392 & 0.392 & 0.386 & 0.402 & 0.392 & 0.401 & 0.401 & 0.395 & 0.418 & 0.401 \\
FaceDetection & 0.684 & 0.684 & 0.684 & 0.685 & 0.684 & 0.692 & 0.692 & 0.691 & 0.692 & 0.692 \\
FingerMovements & 0.650 & 0.647 & 0.639 & 0.666 & 0.647 & 0.715 & 0.716 & 0.714 & 0.718 & 0.716 \\
HandMovementDirection & 0.493 & 0.437 & 0.443 & 0.506 & 0.437 & 0.547 & 0.514 & 0.517 & 0.584 & 0.514 \\
Handwriting & 0.616 & 0.613 & 0.591 & 0.647 & 0.613 & 0.730 & 0.725 & 0.714 & 0.740 & 0.725 \\
Heartbeat & 0.810 & 0.715 & 0.735 & 0.780 & 0.715 & 0.820 & 0.746 & 0.760 & 0.784 & 0.746 \\
InsectWingbeat & 0.711 & 0.711 & 0.711 & 0.713 & 0.711 & 0.723 & 0.723 & 0.721 & 0.722 & 0.723 \\
JapaneseVowels & 0.991 & 0.989 & 0.990 & 0.990 & 0.989 & 0.996 & 0.996 & 0.996 & 0.995 & 0.996 \\
Libras & 0.794 & 0.794 & 0.790 & 0.811 & 0.794 & 0.917 & 0.917 & 0.916 & 0.923 & 0.917 \\
LSST & 0.653 & 0.449 & 0.461 & 0.597 & 0.449 & 0.668 & 0.443 & 0.461 & 0.581 & 0.443 \\
MotorImagery & 0.650 & 0.650 & 0.649 & 0.651 & 0.650 & 0.685 & 0.685 & 0.683 & 0.690 & 0.685 \\
NATOPS & 0.989 & 0.989 & 0.989 & 0.989 & 0.989 & 0.992 & 0.992 & 0.992 & 0.992 & 0.992 \\
PEMS-SF & 0.792 & 0.797 & 0.788 & 0.796 & 0.797 & 0.879 & 0.878 & 0.875 & 0.883 & 0.878 \\
PenDigits & 0.976 & 0.976 & 0.976 & 0.978 & 0.976 & 0.985 & 0.985 & 0.985 & 0.986 & 0.985 \\
PhonemeSpectra & 0.313 & 0.313 & 0.300 & 0.318 & 0.313 & 0.323 & 0.323 & 0.314 & 0.330 & 0.323 \\
RacketSports & 0.898 & 0.906 & 0.904 & 0.905 & 0.906 & 0.908 & 0.915 & 0.914 & 0.915 & 0.915 \\
SelfRegulationSCP1 & 0.901 & 0.901 & 0.901 & 0.901 & 0.901 & 0.903 & 0.903 & 0.903 & 0.904 & 0.903 \\
SelfRegulationSCP2 & 0.594 & 0.594 & 0.589 & 0.599 & 0.594 & 0.608 & 0.608 & 0.608 & 0.609 & 0.608 \\
StandWalkJump & 0.633 & 0.633 & 0.620 & 0.640 & 0.633 & 0.667 & 0.667 & 0.631 & 0.715 & 0.667 \\
SpokenArabicDigits & 1.000 & 1.000 & 1.000 & 1.000 & 1.000 & 1.000 & 1.000 & 1.000 & 1.000 & 1.000 \\
UWaveGestureLibrary & 0.902 & 0.902 & 0.900 & 0.906 & 0.902 & 0.930 & 0.930 & 0.929 & 0.933 & 0.930 \\ \midrule
Avg. & 0.780 & 0.767 & 0.763 & 0.780 & 0.767 & 0.809 & 0.797 & 0.793 & 0.807 & 0.797 \\ \midrule
\end{tabular}%
}
\end{table*}

\subsection{Univariate Time Series Classification}
\vspace{-0.1cm}
\begin{table*}[!b]
\centering
\caption{The full results on 85 UCR datasets. We reported multiple metrics, including Accuracy, balanced Accuracy, F1, Precision (P), and Recall (R).}
\label{tab:utsc-full}
\resizebox{0.75\textwidth}{!}{%
\begin{tabular}{lccccc|ccccc}\toprule
\multicolumn{1}{c}{\multirow{2}{*}{\textbf{Dataset}}} & \multicolumn{5}{c}{\textbf{Uni.}} & \multicolumn{5}{c}{\textbf{Full}} \\ 
\multicolumn{1}{c}{} & \textbf{Accuracy} & \textbf{Bal. Accuracy} & \textbf{F1 marco} & \textbf{P marco} & \textbf{R marco} & \textbf{Accuracy} & \textbf{Bal. Accuracy} & \textbf{F1 marco} & \textbf{P marco} & \textbf{R marco}\\ \midrule
Adiac & 0.778 & 0.780 & 0.760 & 0.792 & 0.780 & 0.786 & 0.791 & 0.774 & 0.812 & 0.791 \\
ArrowHead & 0.909 & 0.908 & 0.906 & 0.908 & 0.908 & 0.911 & 0.908 & 0.909 & 0.913 & 0.908 \\
Beef & 0.800 & 0.800 & 0.798 & 0.825 & 0.800 & 0.833 & 0.833 & 0.828 & 0.866 & 0.833 \\
BeetleFly & 1.000 & 1.000 & 1.000 & 1.000 & 1.000 & 1.000 & 1.000 & 1.000 & 1.000 & 1.000 \\
BirdChicken & 1.000 & 1.000 & 1.000 & 1.000 & 1.000 & 1.000 & 1.000 & 1.000 & 1.000 & 1.000 \\
Car & 0.925 & 0.914 & 0.920 & 0.942 & 0.914 & 0.933 & 0.924 & 0.931 & 0.952 & 0.924 \\
CBF & 1.000 & 1.000 & 1.000 & 1.000 & 1.000 & 1.000 & 1.000 & 1.000 & 1.000 & 1.000 \\
ChlorineConcentration & 0.835 & 0.806 & 0.814 & 0.825 & 0.806 & 0.847 & 0.804 & 0.823 & 0.859 & 0.804 \\
CinCECGTorso & 0.785 & 0.785 & 0.782 & 0.796 & 0.785 & 0.811 & 0.811 & 0.808 & 0.819 & 0.811 \\
Coffee & 1.000 & 1.000 & 1.000 & 1.000 & 1.000 & 1.000 & 1.000 & 1.000 & 1.000 & 1.000 \\
Computers & 0.834 & 0.834 & 0.834 & 0.834 & 0.834 & 0.880 & 0.880 & 0.880 & 0.880 & 0.880 \\
CricketX & 0.841 & 0.843 & 0.840 & 0.850 & 0.843 & 0.854 & 0.857 & 0.853 & 0.862 & 0.857 \\
CricketY & 0.826 & 0.827 & 0.826 & 0.834 & 0.827 & 0.844 & 0.845 & 0.843 & 0.848 & 0.845 \\
CricketZ & 0.850 & 0.842 & 0.841 & 0.849 & 0.842 & 0.862 & 0.854 & 0.855 & 0.863 & 0.854 \\
DiatomSizeReduction & 0.979 & 0.961 & 0.968 & 0.978 & 0.961 & 0.990 & 0.982 & 0.987 & 0.992 & 0.982 \\
DistalPhalanxOutlineAgeGroup & 0.788 & 0.789 & 0.796 & 0.805 & 0.789 & 0.806 & 0.779 & 0.806 & 0.850 & 0.779 \\
DistalPhalanxOutlineCorrect & 0.812 & 0.790 & 0.798 & 0.823 & 0.790 & 0.819 & 0.809 & 0.812 & 0.818 & 0.809 \\
DistalPhalanxTW & 0.745 & 0.577 & 0.570 & 0.604 & 0.577 & 0.748 & 0.610 & 0.586 & 0.588 & 0.610 \\
Earthquakes & 0.791 & 0.633 & 0.640 & 0.807 & 0.633 & 0.802 & 0.664 & 0.684 & 0.767 & 0.664 \\
ECG200 & 0.930 & 0.915 & 0.923 & 0.932 & 0.915 & 0.935 & 0.925 & 0.929 & 0.933 & 0.925 \\
ECG5000 & 0.945 & 0.554 & 0.591 & 0.675 & 0.554 & 0.946 & 0.569 & 0.616 & 0.722 & 0.569 \\
ECGFiveDays & 1.000 & 1.000 & 1.000 & 1.000 & 1.000 & 1.000 & 1.000 & 1.000 & 1.000 & 1.000 \\
ElectricDevices & 0.740 & 0.630 & 0.629 & 0.671 & 0.630 & 0.788 & 0.722 & 0.721 & 0.753 & 0.722 \\
FaceAll & 0.934 & 0.949 & 0.925 & 0.913 & 0.949 & 0.959 & 0.953 & 0.946 & 0.943 & 0.953 \\
FaceFour & 0.955 & 0.958 & 0.959 & 0.962 & 0.958 & 0.966 & 0.968 & 0.970 & 0.974 & 0.968 \\
FacesUCR & 0.957 & 0.940 & 0.943 & 0.948 & 0.940 & 0.960 & 0.944 & 0.946 & 0.949 & 0.944 \\
FiftyWords & 0.803 & 0.678 & 0.673 & 0.707 & 0.678 & 0.813 & 0.685 & 0.676 & 0.703 & 0.685 \\
Fish & 0.991 & 0.993 & 0.992 & 0.991 & 0.993 & 0.997 & 0.998 & 0.997 & 0.997 & 0.998 \\
FordA & 0.966 & 0.966 & 0.966 & 0.966 & 0.966 & 0.967 & 0.967 & 0.967 & 0.967 & 0.967 \\
FordB & 0.862 & 0.862 & 0.862 & 0.862 & 0.862 & 0.869 & 0.869 & 0.869 & 0.869 & 0.869 \\
GunPoint & 1.000 & 1.000 & 1.000 & 1.000 & 1.000 & 1.000 & 1.000 & 1.000 & 1.000 & 1.000 \\
Ham & 0.824 & 0.826 & 0.823 & 0.834 & 0.826 & 0.843 & 0.844 & 0.843 & 0.845 & 0.844 \\
HandOutlines & 0.958 & 0.947 & 0.954 & 0.963 & 0.947 & 0.970 & 0.964 & 0.968 & 0.971 & 0.964 \\
Haptics & 0.528 & 0.528 & 0.518 & 0.542 & 0.528 & 0.534 & 0.535 & 0.523 & 0.553 & 0.535 \\
Herring & 0.742 & 0.698 & 0.697 & 0.798 & 0.698 & 0.750 & 0.723 & 0.722 & 0.778 & 0.723 \\
InlineSkate & 0.356 & 0.367 & 0.361 & 0.381 & 0.367 & 0.384 & 0.395 & 0.386 & 0.395 & 0.395 \\
InsectWingbeatSound & 0.627 & 0.627 & 0.618 & 0.637 & 0.627 & 0.641 & 0.641 & 0.630 & 0.641 & 0.641 \\
ItalyPowerDemand & 0.974 & 0.974 & 0.974 & 0.974 & 0.974 & 0.976 & 0.976 & 0.976 & 0.976 & 0.976 \\
LargeKitchenAppliances & 0.917 & 0.917 & 0.917 & 0.919 & 0.917 & 0.932 & 0.932 & 0.932 & 0.932 & 0.932 \\
Lightning2 & 0.918 & 0.919 & 0.918 & 0.918 & 0.919 & 0.926 & 0.926 & 0.926 & 0.926 & 0.926 \\
Lightning7 & 0.897 & 0.910 & 0.896 & 0.896 & 0.910 & 0.918 & 0.923 & 0.915 & 0.924 & 0.923 \\
Mallat & 0.970 & 0.970 & 0.970 & 0.970 & 0.970 & 0.980 & 0.980 & 0.980 & 0.980 & 0.980 \\
Meat & 0.942 & 0.942 & 0.942 & 0.950 & 0.942 & 0.958 & 0.958 & 0.958 & 0.959 & 0.958 \\
MedicalImages & 0.808 & 0.751 & 0.761 & 0.789 & 0.751 & 0.818 & 0.791 & 0.792 & 0.808 & 0.791 \\
MiddlePhalanxOutlineAgeGroup & 0.666 & 0.481 & 0.497 & 0.810 & 0.481 & 0.679 & 0.511 & 0.538 & 0.789 & 0.511 \\
MiddlePhalanxOutlineCorrect & 0.869 & 0.858 & 0.864 & 0.876 & 0.858 & 0.878 & 0.872 & 0.875 & 0.879 & 0.872 \\
MiddlePhalanxTW & 0.627 & 0.435 & 0.414 & 0.429 & 0.435 & 0.633 & 0.468 & 0.450 & 0.470 & 0.468 \\
MoteStrain & 0.925 & 0.925 & 0.924 & 0.924 & 0.925 & 0.928 & 0.928 & 0.927 & 0.927 & 0.928 \\
NonInvasiveFetalECGThorax1 & 0.927 & 0.926 & 0.923 & 0.933 & 0.926 & 0.936 & 0.935 & 0.934 & 0.937 & 0.935 \\
NonInvasiveFetalECGThorax2 & 0.936 & 0.934 & 0.932 & 0.937 & 0.934 & 0.944 & 0.940 & 0.938 & 0.942 & 0.940 \\
OliveOil & 0.800 & 0.719 & 0.672 & 0.714 & 0.719 & 0.833 & 0.764 & 0.744 & 0.858 & 0.764 \\
OSULeaf & 0.936 & 0.922 & 0.927 & 0.938 & 0.922 & 0.948 & 0.930 & 0.939 & 0.955 & 0.930 \\
PhalangesOutlinesCorrect & 0.857 & 0.839 & 0.846 & 0.857 & 0.839 & 0.860 & 0.840 & 0.848 & 0.862 & 0.840 \\
Phoneme & 0.332 & 0.186 & 0.188 & 0.228 & 0.186 & 0.350 & 0.237 & 0.235 & 0.278 & 0.237 \\
Plane & 1.000 & 1.000 & 1.000 & 1.000 & 1.000 & 1.000 & 1.000 & 1.000 & 1.000 & 1.000 \\
ProximalPhalanxOutlineAgeGroup & 0.885 & 0.787 & 0.806 & 0.836 & 0.787 & 0.893 & 0.825 & 0.841 & 0.861 & 0.825 \\
ProximalPhalanxOutlineCorrect & 0.931 & 0.915 & 0.920 & 0.925 & 0.915 & 0.940 & 0.920 & 0.929 & 0.941 & 0.920 \\
ProximalPhalanxTW & 0.834 & 0.575 & 0.585 & 0.671 & 0.575 & 0.834 & 0.550 & 0.539 & 0.547 & 0.550 \\
RefrigerationDevices & 0.603 & 0.603 & 0.597 & 0.608 & 0.603 & 0.611 & 0.611 & 0.602 & 0.612 & 0.611 \\
ScreenType & 0.628 & 0.628 & 0.626 & 0.637 & 0.628 & 0.637 & 0.637 & 0.635 & 0.648 & 0.637 \\
ShapeletSim & 1.000 & 1.000 & 1.000 & 1.000 & 1.000 & 1.000 & 1.000 & 1.000 & 1.000 & 1.000 \\
ShapesAll & 0.880 & 0.880 & 0.878 & 0.895 & 0.880 & 0.883 & 0.883 & 0.881 & 0.902 & 0.883 \\
SmallKitchenAppliances & 0.825 & 0.825 & 0.826 & 0.830 & 0.825 & 0.845 & 0.845 & 0.847 & 0.855 & 0.845 \\
SonyAIBORobotSurface1 & 0.958 & 0.957 & 0.957 & 0.956 & 0.957 & 0.973 & 0.974 & 0.973 & 0.972 & 0.974 \\
SonyAIBORobotSurface2 & 0.955 & 0.957 & 0.953 & 0.949 & 0.957 & 0.962 & 0.963 & 0.960 & 0.957 & 0.963 \\
StarLightCurves & 0.980 & 0.961 & 0.970 & 0.979 & 0.961 & 0.981 & 0.962 & 0.972 & 0.983 & 0.962 \\
Strawberry & 0.976 & 0.976 & 0.974 & 0.971 & 0.976 & 0.978 & 0.977 & 0.976 & 0.976 & 0.977 \\
SwedishLeaf & 0.964 & 0.965 & 0.964 & 0.965 & 0.965 & 0.965 & 0.965 & 0.965 & 0.966 & 0.965 \\
Symbols & 0.952 & 0.953 & 0.952 & 0.953 & 0.953 & 0.974 & 0.975 & 0.974 & 0.975 & 0.975 \\
SyntheticControl & 1.000 & 1.000 & 1.000 & 1.000 & 1.000 & 1.000 & 1.000 & 1.000 & 1.000 & 1.000 \\
ToeSegmentation1 & 0.974 & 0.974 & 0.974 & 0.974 & 0.974 & 0.976 & 0.977 & 0.976 & 0.976 & 0.977 \\
ToeSegmentation2 & 0.985 & 0.983 & 0.975 & 0.968 & 0.983 & 0.985 & 0.974 & 0.974 & 0.974 & 0.974 \\
Trace & 1.000 & 1.000 & 1.000 & 1.000 & 1.000 & 1.000 & 1.000 & 1.000 & 1.000 & 1.000 \\
TwoLeadECG & 1.000 & 1.000 & 1.000 & 1.000 & 1.000 & 1.000 & 1.000 & 1.000 & 1.000 & 1.000 \\
TwoPatterns & 1.000 & 1.000 & 1.000 & 1.000 & 1.000 & 1.000 & 1.000 & 1.000 & 1.000 & 1.000 \\
UWaveGestureLibraryAll & 0.845 & 0.846 & 0.841 & 0.854 & 0.846 & 0.846 & 0.847 & 0.838 & 0.854 & 0.847 \\
UWaveGestureLibraryX & 0.798 & 0.796 & 0.789 & 0.787 & 0.796 & 0.800 & 0.797 & 0.792 & 0.789 & 0.797 \\
UWaveGestureLibraryY & 0.667 & 0.669 & 0.665 & 0.683 & 0.669 & 0.691 & 0.692 & 0.692 & 0.697 & 0.692 \\
UWaveGestureLibraryZ & 0.706 & 0.708 & 0.701 & 0.709 & 0.708 & 0.721 & 0.722 & 0.712 & 0.726 & 0.722 \\
Wafer & 0.999 & 0.998 & 0.997 & 0.995 & 0.998 & 0.999 & 0.999 & 0.998 & 0.998 & 0.999 \\
Wine & 0.759 & 0.759 & 0.752 & 0.799 & 0.759 & 0.843 & 0.843 & 0.841 & 0.851 & 0.843 \\
WordSynonyms & 0.683 & 0.523 & 0.537 & 0.600 & 0.523 & 0.705 & 0.566 & 0.574 & 0.629 & 0.566 \\
Worms & 0.844 & 0.819 & 0.819 & 0.831 & 0.819 & 0.877 & 0.851 & 0.861 & 0.884 & 0.851 \\
WormsTwoClass & 0.857 & 0.847 & 0.851 & 0.864 & 0.847 & 0.883 & 0.877 & 0.880 & 0.884 & 0.877 \\
Yoga & 0.848 & 0.847 & 0.847 & 0.847 & 0.847 & 0.893 & 0.892 & 0.892 & 0.892 & 0.892 \\ \midrule
Avg. & 0.862 & 0.834 & 0.833 & 0.852 & 0.834 & 0.873 & 0.848 & 0.848 & 0.864 & 0.848 \\ \bottomrule
\end{tabular}%
}
\end{table*}
Please refer to Table \ref{tab:utsc-full} below.
\newpage
\section{Discussion}\label{appendix:discuss}
\subsection{Why Choose ITime, PatchTST, and Timesnet as baselines?}
The selection is based on their impressive impact on the TS community.

\subsection{Why choose 10 datasets in the Model Analysis Section?}
These are widely chosen in different works, such as TimesNet \cite{wu2022timesnet}.

    \subsection{Why is Our Method Potentially Better than SAM in TSC?}
As SAM is not central to our work, we defer its detailed theoretical analysis to future work while providing insights for this superiority. We conjecture the following two reasons as following:

\noindent\textbf{Training sample size.} Most SAM papers focus on image datasets \cite{foret2020sharpness}, which typically have large training sizes (60k~14M). In contrast, TSC datasets often have very limited training sizes, e.g., the UW and SCP2 datasets have only 120 and 200 training samples, respectively. SAM's effectiveness in such cases remains unexplored.

\noindent\textbf{Sensitive to hyper-parameters.} As studied in \cite{andriushchenko2022towards} (e.g., Fig.16), SAM’s performance is highly sensitive to its hyper-parameters (e.g., dataset-dependent batch size and perturbation radius). Poor choices can easily lead to worse performance than standard training. Given diverse TSC datasets, identifying universal hyper-parameters for SAM performing well on most datasets is challenging.

\subsection{Is Diagonal Approximation Necessary?}
\textbf{Yes.} DL models typically have huge parameters (e.g., ITime has 600k parameters on SCP1, so Transformer-based models can have even more parameters). Therefore, computing and storing the full FIM on a typical GPU is extremely inefficient or even not feasible. We have also tested it on an A100 GPU, and the results support this claim. We also found related works~\cite{kirkpatrick2017overcoming,lee2017overcoming,jhunjhunwala2024fedfisher} that consistently apply diagonal approximation to tackle a similar computational issue, and they mention that the diagonal elements contain sufficiently important information.

It is worth mentioning that EWC~\cite{lee2017overcoming} preserves prior knowledge by penalizing changes to important weights, using a Gaussian posterior centered at previous weights with precision from the observed Fisher information (Laplace approximation). Notably, EWC uses a diagonal approximation, aligning with and supporting the efficiency goals of our work.

Another relevant work, K-FAC~\cite{martens2015optimizing}, addresses the high computational cost of the FIM by approximating large blocks of it, corresponding to entire layers, as the Kronecker product of two much smaller matrices. We consider this a promising direction for future work to achieve more accurate and efficient FIM approximations.

\subsection{Is Analysis of Non-Minimum Points Needed?}
\textbf{No.} Our theoretical analysis aims to deliver the \textbf{achievability} of a better convergence. Since the optimizers [by simply adjusting hyper-parameters], in general, can easily reach local minima after convergence in the TSC task, it is sufficient to evaluate sharpness at local minima and their neighbors. In Proposition \ref{prop:1}, we only claim that an appropriate FIC could potentially lead to a convergence to flatter minima. Hence, non-minimum points do not affect our conclusion regarding achievability.

This focus on achievability is analogous to the approach commonly used in \textit{Coding Theory} \cite{cover1999elements}, where initial results often emphasize achievable rates to demonstrate feasibility before refining practical implementation further. Similarly, our work lays the groundwork for future exploration of theoretical optimality.

Moreover, this claim is strongly supported by empirical evidence, where our method achieves $\sim 40$\% reduction in sharpness and $\sim 4\%$ gain in accuracy as presented in Figs. \ref{fig:aba}, \ref{fig:sharp}, and \ref{fig:landscape}. These results validate the practical implications of our theoretical analysis and demonstrate the effectiveness of our proposed approach. 

\subsection{Can FIC Compete Related Methods Related to Domain Shift Problem?   }
\textbf{Yes.} We compare two related methods that target to solve domain-shift problem:

\noindent\textbf{RevIN.} Our method outperforms RevIN, which is the most common approach for addressing domain shift in time series. The main motivation of our work is that RevIN’s effectiveness in time series classification remains unexplored. Accordingly, we conducted an empirical investigation that demonstrates RevIN’s ineffectiveness (see Sec. \ref{sec:inv}). 

\noindent\textbf{SAM.} Our method outperforms SAM in both accuracy and efficiency in TSC, as presented in Table \ref{tab:sam}.

\subsection{Sharpness and Generalization}
While the link between sharpness and generalization is out of our focus, here we want to include more discussion about them. We fully acknowledge that the relationship between flat minima and generalization remains an open and nuanced research question. Rather than taking a definitive stance in this ongoing debate, our work aims to contribute to this conversation by demonstrating that a regularization strategy informed by Fisher information and sharpness can lead to improved robustness and generalization in real-world time series tasks. Importantly, we have taken care to avoid overclaims in the paper, using qualified language such as "potential" and "achievable" to reflect the limitations inherent in this area. While authors in \cite{dinh2017sharp,petzka2021relative} raise concerns about its limitation, these results are derived under specific assumptions (e.g., fully connected ReLU networks and carefully constructed reparameterizations). Their applicability to general architectures and practical training setups remains limited.

Moreover, recent empirical studies \cite{jiang2019fantastic,andriushchenko2022towards} suggest that in practical settings, where such reparameterizations are not applied, sharpness (as commonly measured) can still correlate meaningfully with generalization. These observations support the idea that sharpness-based metrics, while theoretically imperfect, can still provide practical value. In addition, as discussed in our related work section, several recent papers~\cite{neyshabur2017exploring,zhang2024implicit,foret2020sharpness,andriushchenko2022towards,kim2022Fisher,yun2024riemannian} supported the utility of sharpness-related methods and successfully leveraged them to improve learning outcomes.

Therefore, we believe our results add to this growing body of evidence, particularly in the underexplored domain of time series data, and we remain cautious yet optimistic about the promise of these methods.

\section{Graphic Summary}
See Fig. \ref{fig:FIG0}.

\begin{figure}[h]
    \centering
    \includegraphics[width=0.7\linewidth]{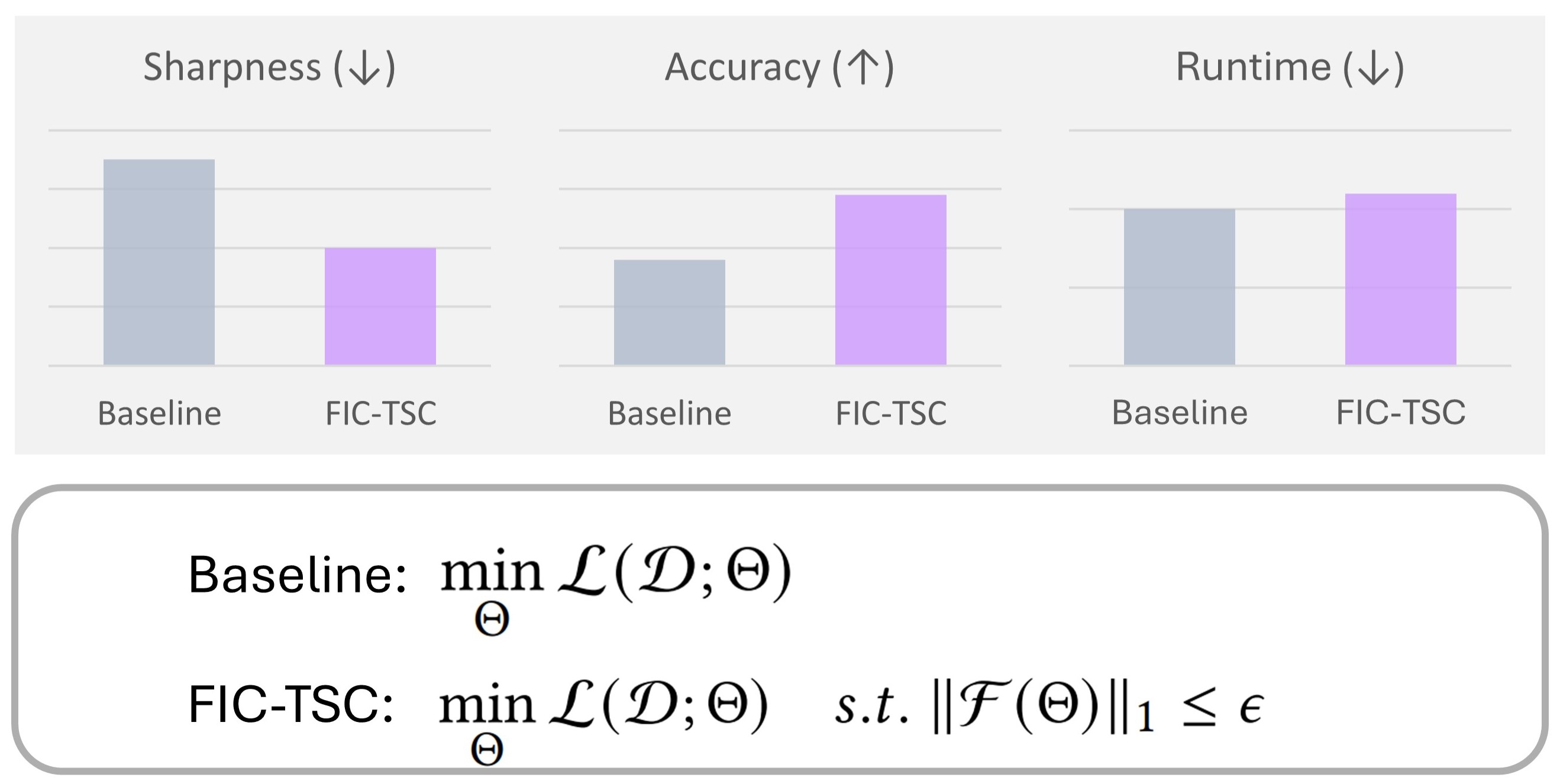}
    \caption{Comparison of the baseline method (standard training) and our proposed FIC-TSC approach. Training with FIC-TSC leads to convergence at a flatter minimum, potentially enhancing performance. The additional runtime incurred is insignificant and considered negligible.}
    \label{fig:FIG0}
\end{figure}






\end{document}